\journal{Submitted to the Scandinavian Journal of Statistics}
\DeclareMathOperator\sk{sk}
\DeclareMathOperator\cl{cl}
\DeclareMathOperator\an{an}
\newcommand{\fla}{\mbox{$\hspace{.05em} \prec
\!\!\!\!\!\frac{\nn \nn}{\nn}$}}
\newcommand{\arc}{\mbox{$\hspace{.06em} \prec
\!\!\!\!\!\frac{\nn \nn}{\nn}
\!\!\!\!\!
\succ\! \hspace{.25ex}$}}
\newcommand{\ts}[1]{\textcolor{blue}{*** TS: #1 ***}} 
\newcommand{\erk}{\hfill \ensuremath{\Diamond}} 
\newcommand\ns[1]{ \left\{ {#1} \right\} }  
\newcommand\vij{V\setminus\ns{i,j}}
\newcommand\vijk{V\setminus\ns{i,j,k}}
\newcommand{\ful}{\mbox{$\, \frac{ \nn \nn \;}{ \nn \nn
}$}}
\newcommand{\nn}[0]{\hspace*{.7em}}
\newcommand{\fra}{\mbox{$\hspace{.05em} \frac{\nn
\nn}{\nn
}\!\!\!\!\! \succ \! \hspace{.25ex}$}}
\begin{document}

\title{A General Framework on Conditions for Constraint-based Causal Learning}

\author[1]{Kai Z. Teh}

\author[1]{Kayvan Sadeghi}

\author[1]{Terry Soo}

\authormark{Teh \textsc{et al.}}
\titlemark{Framework for Constraint-based Learning}

\address{\orgdiv{Department of Statistical Science}, 
\orgname{University College London}, \orgaddress{\state{London}, \country{UK}}}

\corres{Kai Teh \email{kai.teh.21@ucl.ac.uk}}



\abstract[Abstract]{Most constraint-based causal learning algorithms provably return the correct causal graph under certain correctness conditions, such as faithfulness. By representing any constraint-based causal learning algorithm using the notion of a property, 
we provide a general framework to obtain and study correctness conditions for these algorithms. From the framework, we provide exact correctness conditions for the PC algorithm, which are then related to the correctness conditions of some other existing causal discovery algorithms. The framework also suggests a paradigm for designing causal learning algorithms which allows for the correctness conditions of algorithms to be controlled for before designing the actual algorithm, and has the following implications. We show that the sparsest Markov representation condition is the weakest correctness condition for algorithms that output ancestral graphs or directed acyclic graphs satisfying any existing notions of minimality.
We also reason that Pearl-minimality is necessary for meaningful causal learning but not sufficient to relax the faithfulness condition and, as such, has to be strengthened, such as by including background knowledge, for causal learning beyond faithfulness.}

\keywords{Causal Discovery; Graphical Models}


\maketitle


\section{Introduction}\label{sec1}


A main goal of graph-based causal inference is causal discovery---given data, one would like to uncover the underlying causal structure in the form of a true causal graph, on which conventional graph-based causal inference techniques depend. We will mostly be concerned with the setting of observational data,
such as when randomised control trials are unavailable.  In the absence of interventional data, the true causal graph is only recoverable up to its graphical separations. 
Causal discovery approaches can be categorised into score-based approaches \citep{score} and constraint-based approaches \citep{zbMATH01600338}, of which the latter will be the focus of this work.  


To work correctly, constraint-based approaches require the (untestable) assumption and condition that the probabilistic dependency structure is a good representation for the graphical structure of the true causal graph.  
A natural and common assumption is 
faithfulness,  where every conditional independence in the data generating distribution is exactly represented by the true causal graph \citep{Faith}.  Under faithfulness, most constraint-based learning approaches such as 
the PC and SGS algorithms provably return the true causal graph, up to its graphical separations. However, simple examples show that faithfulness is easily violated, and  can be too stringent in practice and theory \citep{faithbad,Andersen_2013}.

To relax the faithfulness assumption, there have been proposals of causal learning algorithms with relaxed correctness conditions. These algorithms include the sparsest permutation (SP) algorithm by \citet{UhlSP} and the recent natural structure learning algorithms of \citet{Sad} and \citet{teh}, both of which provably return the graphical separations of the true causal graph under strictly weaker assumptions than faithfulness.

Addressing the problem of conditions for causal learning algorithm being too strong, we introduce a theoretical framework which encompasses the conditions under which any constraint-based causal learning algorithms work, along with providing such conditions given any algorithm. Via the notion of a property, the framework shows a duality between algorithms and conditions under which the algorithms work. This duality also suggests an alternative paradigm for designing causal learning algorithms where just by considering the related property, the correctness condition of the algorithm is controlled for before designing the actual computational steps of the algorithm itself.  
As implications of the provided framework we will show the following:  1.) we provide exact conditions for when the PC algorithm works, and relate them to correctness conditions of some other existing algorithms;  2.) we show that if one 
were 
to attempt to build an algorithm that outputs maximal ancestral graphs or directed acyclic graphs that satisfy any existing notions of minimality, the sparsest Markov representation assumption from \cite{UhlSP} is the weakest possible condition for such algorithms to work;
3.) we reason that Pearl-minimality \citep{zbMATH05645279} is necessary for meaningful causal learning, and strengthening Pearl-minimality, such as by including background knowledge \citep{meek}, is necessary for meaningful causal learning beyond faithfulness.

The structure of the paper is as follows: Section \ref{sec2} covers the relevant background, Section \ref{theory} covers the theory of the main framework, and Sections \ref{type} and \ref{degen} discuss implications and applications of the framework. 
The proofs are given in the appendix.

\section{Preliminaries}
\label{sec2}

Here, we will cover all the concepts and terminology relevant for this work. 

\subsection{Graphical Models}
Let \(G\) denote a graph over a finite  set of nodes \(V = \ns{1, \ldots, n} \), with 
only
one of the three types of edges:  directed  
($\fra$),   
undirected 
(\(\ful\)), and 
bidirected edges 
($\arc$), connecting any two \emph{adjacent} nodes. A \emph{path} \(\pi\) between nodes \(i_0\) and \(i_n\) is a sequence of nodes 
\(\langle i_0,\ldots, i_n \rangle\), 
such that for all \(m \in \ns{0\ldots,n-1}\), \(i_m\) is adjacent to \(i_{m+1}\);
if, in addition,  we have \(i_0=i_n\), then the path is a  \emph{cycle}. 
 For a path \(\langle i_0,\ldots, i_n \rangle\), if all edges between nodes \(i_m\) and \(i_{m+1}\) are directed as \(i_m\fra i_{m+1}\) 
the path is a \emph{directed path}; likewise, for a cycle \(\langle i_0,\ldots, i_n \rangle\), if all edges between nodes \(i_m\) and \(i_{m+1}\) are directed as \(i_m\fra{} i_{m+1}\),  
then the cycle is a \emph{directed cycle}. Given \(C\subseteq V\), we let \(\an(C)\) denote the \emph{ancestors} of \(C\), the set of nodes such that there exists a directed path to some node \(i\in C\).  

The most general class of graphs we consider are \emph{anterial} graphs.

\begin{definition}[Anterial graphs \citep{lkayvan}]
    A graph \(G\) over a set of  nodes \(V\), which may contain directed,  
    undirected, 
    %
    and bidirected edges 
    is an \emph{anterial} graph if the following omissions are satisfied.
    
    \begin{enumerate}
        \item 
        There does not exist a path \(\langle i_0,\ldots, i_n\rangle \) such that \(i_0\arc i_n\) and for all \(m \in \ns{0,\ldots,n-1}\) the  edge between nodes \(i_m\) and \(i_{m+1}\) 
        is either undirected 
        (\(\ful\)) 
        or directed 
        as \(i_m\fra i_{m+1}\).
        \item 
        There does not exist
        a cycle \(\langle i_0, \ldots, i_n\rangle \) 
        such that for \(m \in \ns{0,\ldots,n-1}\) the edge between nodes \(i_m\) and \(i_{m+1}\)
        is either undirected or directed as \(i_m\fra  i_{m+1}\).
    \end{enumerate}
\end{definition}
 An \emph{ancestral} graph is an anterial graph with the constraint that there are no arrowheads pointing into undirected edges, and a \emph{directed acyclic} graph (DAGs) is an anterial graph with only directed edges.
 Ancestral graphs can be seen as a generalisation of a DAG causal model with unobserved variables \citep{mag}.


\begin{definition}[Ancestral graphs \citep{10.1214/aos/1031689015} and directed acyclic graphs]
    A graph \(G\) over a set of  nodes \(V\),  which may contain directed,  undirected, 
    and bidirected edges  
    is an \emph{ancestral} graph if 
    the following omissions are satisfied.
    \begin{enumerate}
        \item There does not exist nodes \(i,j,k \in V\) such that \(i\ful j\arc k\) or \(i\ful j\fla  k\).
        \item 
        There does not exist a directed path between nodes \(i\) and \(j\) such that \(i\arc j\).
        \item 
        There are no directed cycles in the graph.
    \end{enumerate}
\end{definition}
For \(A,B,C \subseteq V\) 
disjoint,  we let 
\(A\perp_G B\cd C\) denote  
a 
graphical separation in $G$,  between $A$ and $B$ given \(C\). Anterial graphs have a well-defined graphical separation \citep{lkayvan}, which specialises to the classical d-separation \citep{zbMATH05645279} in the case of DAGs, 
and m-separation \citep{10.1214/aos/1031689015} in the case of ancestral graphs.
We associate  a joint distribution \(P\) to the set of nodes \(V\), and a  random vector  \(X=(X_1,\ldots,X_n)\) with the distribution $P$.   We let  \(A\ci B\cd C\) denote 
the  
conditional independence of \( (X_i)_{i\in A}\) and \((X_j)_{j\in B}\) given
\((X_k)_{k\in C}\), which can be thought of as probabilistic separation.

Let \(J(P)\) denote the set triples corresponding to conditional independencies of distribution \(P\), so that  $(A,B,C) \in J(P)$ if and only if \(A\ci B\cd C\).   Similarly, let \(J(G)\) be the set of triples corresponding to  graphical separations of graph \(G\). 
The graphs 
\(G_1\) and \(G_2\) are  \emph{Markov equivalent} and belong in the same \emph{Markov equivalence class (MEC)} if \(J(G_1)=J(G_2)\). Throughout this work, from $P$, we will only be making use of the conditional independence structure captured by $J(P)$, thus we may refer to \(P\) and \(J(P)\) interchangeably.   

\subsection{General Setting for Causal Learning}
In constraint-based causal learning, we often restrict our attention to certain graph classes, such as DAGs for the PC algorithm or ancestral graphs for the FCI algorithm \citep{fci} as such we let \(\mathbb{G}\) denote the class of graphs being considered for causal learning, such as DAGs, ancestral graphs or anterial graphs.

Let $G_0\in \mathbb{G}$ be the true causal graph, which we wish to partially recover  from an observed   distribution \(P\), which is induced from $G_0$. 
The main goal of causal learning is to use observational data from the distribution \(P\) to recover graphs that belong to the MEC of \(G_0\). 
As with constraint-based causal learning, from \(P\) we will mostly be concerned with the conditional independencies \(J(P)\) and  by tacitly assuming the availability of a \emph{conditional independence oracle}, 
we will always 
know
whether or not a given conditional independence relation holds in the distribution.

A causal learning algorithm aims to output a set of graphs \(\boldsymbol{G}(P)\subseteq \mathbb{G}\) from an input distribution \(P\), and if all the graphs in the output set \(\boldsymbol{G}(P)\) are Markov equivalent to the true causal graph \(G_0\), then the algorithm is \emph{correct}. The condition under which the algorithm is correct is then the \emph{correctness condition} of the algorithm.  As in \citet{Sad}, this paradigm can be summarised diagrammatically:
\begin{center}
     \begin{tikzpicture}[]
\node (e1) at (0,-1.5) {True causal graph \(G_0\in \mathbb{G}\)};
\node (e2) at (0,0) {Observational distribution \(P\)};
\node (e3) at (7,0) {\(J(P)\)};
\node (e4) at (7,-1.5) {Output \(\boldsymbol{G}(P)\subseteq \mathbb{G}\)};
\draw[->] (e1) -- (e2) node[midway, left] { induce};
\draw[->] (e2) -- (e3) node[midway, above] { induce};
\draw[->] (e3) -- (e4) node[midway, right] { algorithm};
\draw[-, draw=none] (e1) -- (e4) node[midway] { \(\sim\)} node[midway, above] { Markov equivalent};
\node (e1) at (0,0) {};
    \draw[decorate,decoration={brace,amplitude=10pt,mirror}] (2.5,-1.6) -- (5,-1.6)
    node[midway, yshift=-0.5cm] {If correctness condition holds};
    \end{tikzpicture}
\end{center}

\subsection{Assumptions in Causal Learning Literature}\label{assume}
Here, we will introduce 
relations tying together graphical separations of a graph \(G\) and conditional independencies of a distribution \(P\).
Often these relations are assumed on the true causal graph and its observational distribution, and the success of a learning algorithm leans heavily on the  assumed relation.  Thus in the context of structure learning, these relations are often assumptions. Note that throughout we are merely stating the assumptions in the literature, not necessarily assuming them to be true here. The most fundamental relation  is the \emph{Markov property}.
\begin{definition}[Markov property]
   A distribution  \(P\) is \emph{Markovian} to \(G\) if  $J(G) \subseteq J(P)$---equivalently,  for all disjoint $A,B,C \subseteq V$,  we have 
   $$A\perp_G B\cd C \Rightarrow A\ci B\cd C.$$ 
\end{definition}
If \(P\) is induced from a 
structural equation model 
for a 
DAG \(G\) with independent noise, then the Markov property is satisfied. 
The Markov property has many related forms, and the following pairwise version along with additional assumptions implies the Markov property.
\begin{definition}[Pairwise Markov Property]
    A distribution \(P\) is \emph{pairwise Markovian} to \(G\) if for all nodes \(i,j\in V\), we have
    $$i \textnormal{ not adjacent to } j \textnormal{ in } G \Rightarrow i\ci j\cd \an(i,j).$$
\end{definition}
If, in addition to the Markov property, we have the reverse implication, then we have \emph{faithfulness}, one of the strongest assumptions in causal learning \citep{zbMATH05645279,zbMATH01600338}.
\begin{definition}[Faithfulness]
     A distribution \(P\) is \emph{faithful} to \(G\) if $J(G) = J(P)$---equivalently,   for all disjoint \(A,B,C \subseteq V\), we have
     $$A\perp_G B\cd C \iff A\ci B\cd C.$$ 
\end{definition}
\begin{remark}
    There are stronger assumptions such as the \(\lambda\)-strong faithfulness \citep{strongfaith}, which essentially bounds the strength of the conditional dependence, in the sense of the magnitude of the conditional  Pearson correlation coefficient, to be above \(\lambda\), in order to achieve consistency in conditional independence testing. Since we will not be concerned with testing in  this work, we will not include assumptions of this kind.  \erk
\end{remark} 

Let \(\sk(G)\) denote the \emph{skeleton} of graph \(G\), formed by removing all arrowheads from edges in \(G\). A \emph{v-configuration} is a set of 
three nodes \(i,k,j\in V\) 
such that \(i\) and \(j\) 
are adjacent to \(k\), but \(i\) and \(j\) are not adjacent, 
and will be represented as \(i\sim k\sim j\). A v-configuration oriented as \(i\fra k\fla j\) is a \emph{collider}, otherwise the v-configuration is a \emph{non-collider}. To relate \(P\) with \(\sk(G)\), we have the following.
\begin{definition}[The skeleton \(\sk(P)\)]
    Given a distribution \(P\), the skeleton \(\sk(P)\) is the undirected graph with node set \(V\), such that for all \(i,j\in V\), we have \(i\) is adjacent to \(j\) if and only if there does not exist any \(C\subseteq V\backslash\{i,j\}\) such that \(i\ci j\cd C\).
\end{definition}

\begin{definition}[Adjacency faithfulness]
 A distribution   \(P\) is \emph{adjacency faithful} 
  with respect to  (w.r.t.) 
    \(G\) if \(\sk(P)=\sk(G)\).
\end{definition}

Building on top of the Markov assumption, we have the following variations, all based on some notion of minimality. Based on the skeletons, \cite{orderfaith} introduced the \emph{minimally Markovian} assumption.  
\begin{definition}[Minimally Markovian]
  A distribution  \(P\) is \emph{minimally Markovian} w.r.t.\ \(G\) if \(P\) is Markovian to \(G\) and \(P\) is adjacency faithful w.r.t.\ \(G\).
\end{definition}

The following notions of minimality are originally defined in the context of DAGs, here we give natural extensions of these definitions to a more general graph class \(\mathbb{G}\), which coincides with the original definition when \(\mathbb{G}\) are DAGs.

Let \(|E(G)|\) denote the number of edges in graph \(G\). 
\begin{definition}[Sparsest Markov property \citep{forster}]\label{sparsM}
    Let $\mathbb{G}$ be a class of graphs. A graph  \(G\in \mathbb{G}\) is a \emph{sparsest Markov} graph of \(P\) if \(P\) is Markovian to \(G\) and
    \begin{equation*}
        |E(G)|\leq  |E(G')| \quad \text{for 
 all } G'\in \mathbb{G} 
 \text{ such that  } 
 P \text{ is Markovian to } 
 G'.  
    \end{equation*}
\end{definition}

The minimal graphs in Definition \ref{sparsM} refer to graphs with the least number of edges. In the case of DAGs interpreted as structural equation models (SEMs)  \citep{10.1214/aoms/1177732676}, minimal DAGs correspond to SEMs with functions that use the least overall number of arguments.  
%
\begin{definition}[Pearl-minimality assumption \citep{zbMATH05645279}]\label{p-mindef}
Let $\mathbb{G}$ be a class of graphs.  A distribution $P$ is \emph{Pearl-minimal} to a graph \(G\in \mathbb{G}\) if 
\begin{equation*}
    \nexists G'\in \mathbb{G} \text{ such that } 
    J(G)\subset J(G')\subseteq J(P).
\end{equation*}
\end{definition}
Note that the distribution \(P\) being Markovian to graph \(G\) is equivalent to \(J(G)\subseteq J(P)\), with \(J(G)=J(P)\) being equivalent to \(P\) is faithful to \(G\). Thus Definition \ref{p-mindef} captures the notion of Markovian graphs \(G\) that are closest to being faithful to \(P\).
\begin{definition}[Causal minimality \citep{zbMATH01600338}]\label{causalmindef}
   Let $\mathbb{G}$ be a class of graphs. A distribution  \(P\) is \emph{causally minimal} 
    to \ 
    \(G\in \mathbb{G}\) if \(P\) is Markovian to \(G\) and $P$ is not Markovian to any proper subgraph \(G'\in \mathbb{G}\) of \(G\).
\end{definition}
The minimal graphs in Definition \ref{causalmindef} refer to graphs without subgraphs that distribution \(P\) is Markovian to. Again, interpreting DAGs as SEMs, minimal DAGs correspond to SEMs such that there is no SEM obtainable by only removing functional dependence w.r.t. existing arguments, which still generates \(P\). Thus Definition \ref{causalmindef} captures the notion of the simplest DAG which generates \(P\).

The 
notions 
of causal minimality, Pearl-minimality, and sparsest Markov property 
depend 
on 
the given graph class \(\mathbb{G}\), 
whereas 
the notion of minimally Markovian remains the same across different graph classes. 
When \(\mathbb{G}\) is the class of DAGs, Proposition \ref{propforstlit} provides the following chain of implications, from which \(2\Rightarrow 3\Rightarrow 4\) is provided by
\citet{forster}. We show that \(1\Rightarrow 2\) holds, via a proof analogous to that of Proposition \ref{propforst}, where we relate these notions of minimality  for \(\mathbb{G}\) being the class of \emph{maximal}  ancestral graphs (MAGs) 
\citep{10.1214/aos/1031689015}.

\begin{proposition}[Relation of assumptions]
\label{propforstlit}
     For a given distribution \(P\) and the DAG \(G\), the following statements imply the next:
    \begin{enumerate}
        \item \(P\) is minimally Markovian to \(G\).
        \item \(G\) is a sparsest Markov graph of \(P\).
        \item \(P\) is Pearl-minimal to \(G\).
        \item \(P\) is causally minimal to \(G\).
    \end{enumerate}
\end{proposition}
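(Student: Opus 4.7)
The plan is to establish $1 \Rightarrow 2$ directly via a skeleton-containment argument, since $2 \Rightarrow 3 \Rightarrow 4$ is already supplied by \citet{forster}. Assume $P$ is minimally Markovian to the DAG $G$, so $P$ is Markovian to $G$ and $\sk(P) = \sk(G)$. I need to show that any DAG $G'$ with $P$ Markovian to $G'$ satisfies $|E(G)| \leq |E(G')|$; since both are DAGs, $|E(G)| = |E(\sk(G))|$ and $|E(G')| = |E(\sk(G'))|$, so it suffices to prove the skeleton containment $\sk(G) \subseteq \sk(G')$ (as edge sets).

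First I would invoke the standard DAG fact that any two non-adjacent nodes $i,j$ in a DAG $G'$ admit a d-separating set: if $j \notin \an_{G'}(i)$, then $i \perp_{G'} j \mid \mathrm{pa}_{G'}(i)$ by the local Markov property, and symmetrically otherwise. Applying the Markov property of $P$ with respect to $G'$, this d-separation transfers to a conditional independence $i \ci j \mid C$ in $P$ for some $C \subseteq V \setminus \{i,j\}$. By the definition of $\sk(P)$, this means $i$ and $j$ are not adjacent in $\sk(P)$, hence (using $\sk(P) = \sk(G)$) not adjacent in $G$. Taking the contrapositive yields the desired containment $\sk(G) \subseteq \sk(G')$.

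The argument is short and the only real content is the standard d-separation fact; there is no substantive obstacle, mainly just bookkeeping to note that equality of skeletons together with the Markov assumption on $G'$ forces every edge of $G$ to appear in $G'$. I would keep the write-up to a few lines, flagging that the same template will be reused when the result is generalised to MAGs in Proposition \ref{propforst}, where the analogous skeleton-containment step is the one requiring a nontrivial replacement (since in MAGs non-adjacency no longer automatically yields a separating set via parents, and one must appeal to the m-separation analogue).
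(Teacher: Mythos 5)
Your proposal is correct and follows essentially the same route as the paper: the paper proves $1\Rightarrow 2$ by observing that DAGs are maximal, so non-adjacency in any Markovian $G'$ yields a graphical separation, hence a conditional independence in $P$, hence non-adjacency in $\sk(P)=\sk(G)$, and the contrapositive gives $\sk(G)\subseteq\sk(G')$ and the edge-count inequality (this is the proof of Proposition \ref{propforst}, $1\Rightarrow 2$, specialised to DAGs). One small slip in your justification of the separating-set fact: under the hypothesis $j\notin\an_{G'}(i)$ the local Markov property applied at $j$ gives $i\perp_{G'} j\mid \mathrm{pa}_{G'}(j)$ rather than $\mathrm{pa}_{G'}(i)$ (you need the node whose parents you condition on to have the other as a non-descendant), but the fact itself is standard and the paper likewise simply invokes maximality.
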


\begin{definition}[Maximal graphs \citep{10.1214/aos/1031689015}]
    A graph \(G\) is \emph{maximal} if  for any two non-adjacent nodes 
    $i,j \in V$,  
    there exists \(C(i,j)\subseteq V\backslash \{i,j\}\) such that \(i\perp_G j\cd C(i,j)\).
\end{definition}

Note that the pairwise Markov property is a property relating a distribution and a graph, whereas the notion of maximality is a notion on a graph itself.


We also note the recently introduced  \emph{V-ordered upward stability (V-OUS) and collider-stable} condition \citep{teh} for when \(\mathbb{G}\) are DAGs.

\begin{definition}[V-OUS and collider-stable]\label{vous}
    A distribution \(P\) is V-OUS and collider-stable 
    w.r.t.\ 
    a DAG \(G\) if for all v-configurations \(i\sim  k\sim j\) in \(G\), we have:
    \begin{itemize}
        \item \emph{(V-ordered upward stability (V-OUS))}. If \(i\sim  k\sim j\) is a non-collider, then for all \(C\subseteq V\backslash\{i,j,k\}\), we have  
        \[i\ci j\cd C\Rightarrow i\ci j\cd C\cup \ns{k}.\] 
        %
        \item \emph{(Collider-stability)}. If \(i\fra k\fla j \), then  there exists \(C\subseteq V\backslash\{i,j,k\}\) such that \(i\ci j\cd C\).
    \end{itemize}
\end{definition}

\section{Theory and Framework}
\label{theory}

Let \(\mathbb{G}\) denote a class of graphs that is a subclass of anterial graphs. Throughout this section, we will not make any assumptions on \(\mathbb{G}\), unless explicitly stated.

Given a graph \(G\in \mathbb{G}\) and a distribution $P$ on the same set of nodes,  we write  \(\mathcal{A}(P,G)=\top\) if \(P\) satisfies the \emph{property} \(\mathcal{A}\) 
w.r.t.\ 
the  graph \(G\), otherwise we write \(\mathcal{A}(P,G)\neq\top\) if \(P\) does not satisfy property \(\mathcal{A}\) 
w.r.t.\ 
 \(G\). For example,  if \(\mathcal{A}\) is the faithfulness assumption, 
``\(\mathcal{A}(P,G)=\top\)'' can be read as ``\(P\) is faithful to \(G\).'' When \(\mathcal{A}(P,G)=\top\) for every graph \(G\) in set \(\boldsymbol{G}\), we 
write \(\mathcal{A}(P,\boldsymbol{G})=\top\). 

\begin{remark}
    Alternatively, \(\mathcal{A}\) is a relation relating distributions and graphs, so that $(P, G) \in \mathcal{A}$ if and only if \(\mathcal{A}(P,G)=\top\). However throughout this work, \(\mathcal{A}\) will be referred to as a property, since this terminology is more consistent with the current literature. \erk
\end{remark}

Given a class of graphs \(\mathbb{G}\), and a property \(\mathcal{A}\), we say that \(P\) satisfies \(\mathcal{A}\)-\emph{uniqueness} 
if the graphs
$\ns{G\in \mathbb{G}: \mathcal{A}(P,G)=\top}$ belong to the same Markov equivalence class (hence unique up to MEC);
%
%
if \(\mathcal{A}\)-uniqueness is satisfied,  we write \(U_{\mathcal{A}}(P)=\top\), otherwise \(U_\mathcal{A}(P)\neq\top\). Note the notion of \(\mathcal{A}\)-uniqueness may change depending on the class of graphs \(\mathbb{G}\) 
under consideration. 

We also give the following definitions.
\begin{definition}[Class property]
    Given a class of graphs \(\mathbb{G}\), a property \(\mathcal{A}\) is said to be a \emph{class property} if for all distributions  $P$ and for all graphs  \(G_1, G_2\in \mathbb{G}\) that are Markov equivalent,  we have
    \[ \mathcal{A}(P,G_1)=\top\iff \mathcal{A}(P,G_2)=\top.\]
\end{definition}

Examples of class properties include many common assumptions such as the sparsest Markov assumption, the 
Pearl-minimality assumption,
and the V-OUS and collider-stable condition. Note that property \(\mathcal{A}\) being a  class property can be seen as a converse to \(\mathcal{A}\)-uniqueness holding for all distributions; for class properties we have Markov equivalence implying the property \(\mathcal{A}\), while for \(\mathcal{A}\)-uniqueness we have the property \(\mathcal{A}\) implying Markov equivalence.

Given an algorithm that outputs a set of graphs \(\boldsymbol{G}(P)\),
from an 
input distribution \(P\), we say that
the property \(\mathcal{A}\) 
\emph{corresponds} 
to the algorithm if
    \[\mathcal{A}(P,G)=\top \text{ for } G\in \mathbb{G} 
    \quad \iff \quad 
    G \in \boldsymbol{G}(P).\] 

For example, when \(\mathcal{A}\) is the sparsest Markov property in Definition \ref{sparsM}, the corresponding algorithm is the SP algorithm from \cite{UhlSP}. Note that we can frame \emph{any} constraint-based causal learning algorithm that outputs a set of graphs \(\boldsymbol{G}(P)\) solely from the input distribution \(P\) without relying on additional specified parameters, as a property \(\mathcal{A}\)---we simply set $\mathcal{A}(P, G) = \top$,  
for all graphs \(G\in \boldsymbol{G}(P)\) and $\mathcal{A}(P, G) \neq \top$, otherwise. Using this description of a constraint-based causal learning algorithm as a property, we present our framework. 

\begin{theorem}[Framework]
\label{mainth}
    Given a class of graphs \(\mathbb{G}\), let $\mathcal{A}$ be a property.  Consider a constraint-based causal learning algorithm that outputs a set of graph \(\boldsymbol{G}(P)\subseteq \mathbb{G}\) from a  distribution \(P\), such that \(\mathcal{A}(P,\boldsymbol{G}(P))=\top\). Let  \(G_0\in \mathbb{G}\) be the true causal graph,  then 
        \[ \mathcal{A}(P,G_0)=\top \text{ and } U_{\mathcal{A}}(P)= \top\quad \Rightarrow \quad  \text{ the algorithm is correct}.\] 
    In addition, if \(\mathcal{A}\) is a class property, and corresponds to the algorithm,
    then the reverse implication holds.
    
\end{theorem}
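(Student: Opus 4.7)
The plan is to unpack $\mathcal{A}$-uniqueness, correspondence, and the class property hypothesis into simple set-theoretic statements about the satisfaction set
\[
S_{\mathcal{A}}(P) := \{G \in \mathbb{G} : \mathcal{A}(P,G) = \top\},
\]
which is the natural object tying the algorithm to its correctness condition. By hypothesis, $\mathcal{A}(P, \boldsymbol{G}(P)) = \top$ means $\boldsymbol{G}(P) \subseteq S_{\mathcal{A}}(P)$, while $U_{\mathcal{A}}(P) = \top$ means $S_{\mathcal{A}}(P)$ is contained in a single Markov equivalence class.

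For the forward implication, I would observe that $\mathcal{A}(P, G_0) = \top$ places $G_0 \in S_{\mathcal{A}}(P)$; then $U_{\mathcal{A}}(P) = \top$ forces every graph in $S_{\mathcal{A}}(P)$---in particular every $G \in \boldsymbol{G}(P) \subseteq S_{\mathcal{A}}(P)$---to be Markov equivalent to $G_0$, which is exactly the definition of correctness. For the reverse implication, assume correctness, that $\mathcal{A}$ corresponds to the algorithm, and that $\mathcal{A}$ is a class property. Correspondence strengthens the containment to an equality $\boldsymbol{G}(P) = S_{\mathcal{A}}(P)$. Since correctness says every member of $\boldsymbol{G}(P)$ is Markov equivalent to $G_0$, all members of $S_{\mathcal{A}}(P)$ are Markov equivalent to one another, so $U_{\mathcal{A}}(P) = \top$. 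To conclude $\mathcal{A}(P, G_0) = \top$, pick any $G \in \boldsymbol{G}(P)$: by correspondence $\mathcal{A}(P, G) = \top$, and the class property transfers this along Markov equivalence from $G$ to $G_0$.

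The proof is essentially a rephrasing of definitions, so no deep lemma is required; the only subtle point is the boundary case $\boldsymbol{G}(P) = \emptyset$, where correctness would be vacuously true but no witness $G$ is available to pass through the class property in the last step of the reverse direction. I expect this to be handled by reading ``the algorithm returns a set of graphs'' as implying a nonempty output, which is the natural intended convention and which also ensures the equivalence of the two characterisations is not broken by a degenerate case.
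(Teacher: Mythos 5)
Your proof is correct and follows essentially the same route as the paper's: both directions are direct unwindings of the definitions of $\mathcal{A}$-uniqueness, correspondence, and the class property (the paper proves $U_{\mathcal{A}}(P)=\top$ by contradiction where you argue it directly, a cosmetic difference). Your remark about the degenerate case $\boldsymbol{G}(P)=\emptyset$ is a fair observation the paper does not address explicitly, and the convention you adopt (nonempty output) is the intended reading.
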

 

\begin{remark}
    To see the importance of the class of graphs $\mathbb{G}$ in Theorem \ref{mainth}, recall the definition of \(\mathcal{A}\)-uniqueness---\(P\) satisfies \(\mathcal{A}\)-uniqueness if the graphs \(\{G\in \mathbb{G}: \mathcal{A}(P,G)=\top\}\) are Markov equivalent; this depends on the subclass of anterial graphs \(\mathbb{G}\) being considered. Thus Theorem 1 depends on the subclass of anterial graphs \(\mathbb{G}\) being considered; Theorem 1 holding for a class does not logically imply Theorem 1 holding for a smaller subclass, as will be seen in Remark \ref{classremark}. 
    
    Note that Theorem \ref{mainth} applies to any \(\mathbb{G}\) that is a subclass of anterial graphs, since the notion of Markov equivalence used to define \(\mathcal{A}\)-uniqueness is well defined for anterial graphs. \erk
\end{remark}
If \(\mathcal{A}\) is a class property, then we can  express \(\mathcal{A}\) as a property relating distributions and the \emph{MEC} of graphs, and thus we can substitute
$J(G_0)$ and  $J(\boldsymbol{G}(P))$ for 
$G_0$ and  $\boldsymbol{G}(P)$ in Theorem \ref{mainth}. Similarly, since in constraint-based causal learning,  we are only concerned with \(J(P)\), the conditional independencies of \(P\), we can express Theorem \ref{mainth} as follows.  

\begin{corollary}[Theorem \ref{mainth} in terms of MEC] 
\label{maincor}
For a class property \(\mathcal{A}\) that corresponds to the algorithm, we have
\[\mathcal{A}(J(P),J(G_0))=\top \text{ and } U_{\mathcal{A}}(J(P))=\top  \iff  \text{ the algorithm is correct.}\]
\end{corollary}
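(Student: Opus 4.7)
The plan is to derive Corollary \ref{maincor} as a direct reformulation of Theorem \ref{mainth}: the class-property hypothesis and the constraint-based setting let us replace $G_0$ by $J(G_0)$ and $P$ by $J(P)$ without changing the truth value of the statement. The substantive content is already in Theorem \ref{mainth}; what remains is to justify that the ``$J$-versions'' of the quantities are well defined and then read off the biconditional.

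First I would observe that since $\mathcal{A}$ is a class property, for any distribution $P$ and any $G_1, G_2 \in \mathbb{G}$ with $J(G_1) = J(G_2)$ we have $\mathcal{A}(P,G_1) = \top \iff \mathcal{A}(P,G_2) = \top$. Hence $\mathcal{A}(P, G)$ depends on $G$ only through its MEC $J(G)$, so writing $\mathcal{A}(P, J(G_0))$ is unambiguous. Next I would argue that since $\mathcal{A}$ corresponds to a constraint-based algorithm, $\mathcal{A}(P, G) = \top \iff G \in \boldsymbol{G}(P)$, and $\boldsymbol{G}(P)$ is computed from $J(P)$ alone; therefore $\mathcal{A}(P, G)$ depends on $P$ only through $J(P)$, consistent with the convention of Section \ref{sec2} that $P$ and $J(P)$ may be used interchangeably in this setting. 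The quantity $U_{\mathcal{A}}(P)$, defined via the set $\{G \in \mathbb{G} : \mathcal{A}(P, G) = \top\}$ and the MEC structure on $\mathbb{G}$, inherits this invariance, so $U_{\mathcal{A}}(J(P))$ is likewise well defined and agrees with $U_{\mathcal{A}}(P)$.

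Given these reductions, the forward direction of the corollary is exactly the forward implication of Theorem \ref{mainth} (which holds for any property), and the reverse direction is the additional implication guaranteed by the theorem under the hypothesis that $\mathcal{A}$ is a class property corresponding to the algorithm. I do not anticipate a serious obstacle: the argument is essentially a change of notation. The only delicate point is checking that swapping $P$ for $J(P)$ and $G_0$ for $J(G_0)$ inside both $\mathcal{A}(\cdot, \cdot)$ and $U_{\mathcal{A}}(\cdot)$ preserves truth values on both sides of the biconditional, which follows immediately from the two invariance observations above together with Theorem \ref{mainth}.
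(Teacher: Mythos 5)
Your proposal is correct and follows essentially the same route as the paper, which justifies the corollary in the paragraph preceding it by exactly the two invariance observations you make (class property gives invariance in $G$ up to MEC; the constraint-based setting gives dependence on $P$ only through $J(P)$) and then reads the biconditional off Theorem \ref{mainth}. No gaps.
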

%
%

Theorem \ref{mainth} states that given any causal learning algorithm, if we can identify the corresponding property \(\mathcal{A}\) that relates the output set \(\boldsymbol{G}(P)\) and input distribution \(P\), using the same property \(\mathcal{A}\), by taking its \emph{conjunction} with  \(U_{\mathcal{A}}\), we can then immediately obtain correctness conditions for the corresponding algorithm. This will be illustrated further in Example \ref{smr} and in Section \ref{type} with the PC algorithm. 

Since any constraint-based causal learning algorithm can be expressed as a corresponding property and, in Theorem \ref{mainth}, nothing is assumed about the property \(\mathcal{A}\), Theorem \ref{mainth} allows for a systematic study of the correctness condition of \emph{any} causal learning algorithm. This is done by considering all possible properties \(\mathcal{A}\) and the correctness condition of its corresponding algorithm, \(\mathcal{A}\) and \(U_{\mathcal{A}}\), as will be seen in Section \ref{comp} and Section \ref{degen}.



In Theorem \ref{mainth}, the property \(\mathcal{A}\) describes both the output of the corresponding algorithm (using the term \(\mathcal{A}(P,\boldsymbol{G}(P))=\top\)) \emph{and} the algorithm's correctness condition (using the terms \(\mathcal{A}(P,G_0)=\top\) and \(U_{\mathcal{A}}(P)=\top\)). This duality suggests a different paradigm for designing causal learning algorithms. Instead of first designing the computational steps of the algorithm and then proving that the algorithm works correctly under standard correctness conditions such as faithfulness; the designer first envisions how the output of the algorithm should be related to the input distribution (via a corresponding property \(\mathcal{A}\)), and by studying the correctness condition (\(\mathcal{A}\) and \(U_{\mathcal{A}}\)), the designer can ensure that the correctness condition of the designed algorithm is not too strong, \emph{before} designing the actual steps of the corresponding algorithm that outputs \(\boldsymbol{G}(P)\) which satisfies \(\mathcal{A}(P,\boldsymbol{G}(P))=\top\), on which Theorem \ref{mainth} contains no information. We summarise this paradigm succinctly as follows: 

\begin{center}
     \begin{tikzpicture}[>=stealth]
     \node at (0,0) {
    \begin{minipage}{\linewidth}
      \begin{enumerate}
          \item Select property \(\mathcal{A}\).
          \item Study the correctness condition \(\mathcal{A}\) and \(U_{\mathcal{A}}\).
          \item  Design actual steps of the algorithm corresponding to \(\mathcal{A}\) \quad  (Not informed by Theorem \ref{mainth}).
      \end{enumerate}
    \end{minipage}
  };
 \draw[decorate,decoration={brace,amplitude=10pt,mirror}] (-2,-0.2) -- (-2,0.65)
    node[midway, xshift=2.5cm] {Informed by Theorem \ref{mainth}};
    \end{tikzpicture}
\end{center}

In the following examples, we illustrate how we can recover correctness conditions for some existing causal learning algorithms using Theorem \ref{mainth} with different properties. The examples also allude to the suggested design paradigm.

\begin{example}
[Sparsest Markov property]
\label{smr}Let the class  of graphs \(\mathbb{G}\) be DAGs. The sparsest permutation (SP) algorithm \citep{UhlSP} efficiently builds causally minimal DAGs from permutations, and from these DAGs outputs \(\boldsymbol{G}(P)\),  the set of DAGs with the least number of edges. Thus, it can be seen that the corresponding property \(\mathcal{A}\) of the SP algorithm is the sparsest Markov property, since \(\mathcal{A}(P, G)=\top\) if and only if \(G\in \boldsymbol{G}(P)\); i.e. \(G\) is a sparsest Markov graph of \(P\). Since the property \(\mathcal{A}\) is a class property and corresponds to the SP algorithm, Theorem \ref{mainth} states that 
the exact correctness condition of the SP algorithm is \(\mathcal{A}(P,G_0)=\top\) and \(U_{\mathcal{A}}(P)=\top\), which is equivalent to \(P\) satisfying the sparsest Markov representation (SMR) assumption w.r.t.\ true causal graph \(G_0\). Note that the SMR assumption has also been formulated as unique-frugality 
    \citep{lam}. 
    
Without knowing the actual steps of the SP algorithm and just by considering the property \(\mathcal{A}\) which is the sparsest Markov property, Theorem \ref{mainth} states that the exact correctness condition of the algorithm corresponding to the property \(\mathcal{A}\) is the SMR assumption. Attempting to construct the actual corresponding algorithm that outputs all of the sparsest Markov graphs of input distribution \(P\) would then suggest the SP algorithm.  \erk
\end{example}

\begin{example}
[Adjacency faithfulness and stability]
    Let the class of graphs \(\mathbb{G}\) be DAGs. By modifying the orientation rules of the PC algorithm, the output set $\boldsymbol{G}(P)$  of the ``Me-LoNS'' algorithm is constructed as the set of DAGs such that input distribution \(P\) is adjacency faithful and V-OUS and collider-stable w.r.t. \(G\in \boldsymbol{G}(P)\) \citep{teh}. Thus, the corresponding property \(\mathcal{A}\) of the Me-LoNS algorithm is adjacency faithful and V-OUS and collider-stability, since \(\mathcal{A}(P,G)=\top\) if and only if \(G\in \boldsymbol{G}(P)\) is adjacency faithful and V-OUS and collider-stable 
    w.r.t.\
    \(G\in \mathbb{G}\). Since \(\mathcal{A}\) is a class property, Theorem \ref{mainth} then states that \(\mathcal{A}(P,G_0)=\top\) and
    \(U_{\mathcal{A}}(P)=\top\) is the exact correctness condition for the Me-LoNS algorithm. Note that \(U_{\mathcal{A}}(P)=\top\) is equivalent to the notion of \(P\) being \emph{modified V-stable}, which is defined algorithmically in \citet[Definition 11]{teh}.

   Without knowing the actual steps of the Me-LoNS algorithm and just by considering the property \(\mathcal{A}\) which is adjacency faithful and V-OUS and collider-stability, Theorem \ref{mainth} states that the exact correctness condition of the corresponding algorithm to property \(\mathcal{A}\) are the conditions in \citet{teh} which are strictly weaker than faithfulness. Attempting to construct the actual corresponding algorithm that outputs all of the graphs \(G\) that the input distribution \(P\) is adjacency faithful and V-OUS and collider-stable w.r.t.\ would then suggest the Me-LoNS algorithm. \erk
\end{example}

\begin{remark}
     Correctness conditions for constraint-based causal learning algorithms, such as faithfulness, are untestable since we do not have access to the true causal graph \(G_0\). As such, it is uncertain whether these conditions hold in real-world domains (such as inferring gene regulatory networks), and interpretations from subject matter experts may be needed to assert their validity. This issue can be addressed using our paradigm by selecting a property \(\mathcal{A}\) that is interpretable by subject matter experts, this guarantees that the corresponding algorithm has a correctness condition (\(\mathcal{A}\) and \(U_{\mathcal{A}}\)) that is also interpretable and verifiable by subject matter experts. An example of such properties could be the minimality assumptions in Section \ref{assume}, which all implies that the true causal graph is the `simplest' graph that generates the observational distribution. In Section \ref{degen}, we will apply this paradigm to study these different minimality assumptions.
     
     The subject specific interpretation of what the selected property should be is outside the scope of our work. \erk
\end{remark}

\bmsubsection*{Including background knowledge}

Constraint-based causal learning algorithms often involve the inclusion of background knowledge such as the presence or absence of certain edges in the true causal graph \citep{meek}, which can be described as a property \(\mathcal{E}\) as follows:
\begin{align*}
    \mathcal{E}(P,G)=\top \text{ for } G\in \mathbb{G} \quad \iff \quad G \text{ satisfies the constraints imposed by the background knowledge }
\end{align*}



In \cite{meek}, background knowledge is of the form of required and forbidden edges, thus the definition of the corresponding property \(\mathcal{E}\) does not involve the distribution \(P\). However, we will allow more elaborate forms of background knowledge which do not depend on the distribution \(P\), such as in Example \ref{degenex} when all colliders are absent in the graph. 

Given an algorithm with corresponding property \(\mathcal{A}\) and background knowledge \(\mathcal{E}\), we can obtain the corresponding property of the algorithm after including background knowledge by taking the conjunction of properties  $\mathcal{A} \wedge \mathcal{E}$, since the output graphs \(G\in \boldsymbol{G}(P)\) of the corresponding algorithm have to satisfy both $\mathcal{A}(P,G) = \top$ and $\mathcal{E}(P,G) = \top$.
\subsection{Comparing correctness Conditions}\label{comp}

Recall that in Theorem \ref{mainth}, \(\mathcal{A}(P,G_0)=\top\) and \(U_{\mathcal{A}}(P)=\top\) is the correctness condition of the algorithm that the property \(\mathcal{A}\) corresponds to, which returns the set of graphs \(\boldsymbol{G}(P)\) from the input distribution \(P\) such that \(\mathcal{A}(P,\boldsymbol{G}(P))=\top\). Since any constraint-based causal learning algorithm can be expressed as a corresponding property and nothing is assumed about the property \(\mathcal{A}\), we can describe the correctness condition of \emph{any} causal learning algorithm as  \(\mathcal{A}\) and \(U_{\mathcal{A}}\), for some corresponding property \(\mathcal{A}\).

A common critique of constraint-based causal learning is that the conditions under which algorithms are correct, such as the faithfulness assumption, can be too strong.  
In the literature, many attempts have been made to weaken correctness conditions for causal learning \citep{UhlSP, UhlGSP, lam}. Theorem \ref{mainth} suggests the design paradigm in which the designer first selects a property and studies how strong the correctness condition of the corresponding algorithm is, before finally designing the actual steps of the corresponding algorithm. Thus, given two properties \(\mathcal{A}\) and \(\mathcal{B}\), it will be of interest to be able to compare the resulting correctness conditions for the corresponding algorithms, which will be addressed here. The designer can then select the property which results in a corresponding algorithm that is correct under weaker correctness conditions. 

Given any property \(\mathcal{A}\), to compare the strength of the resulting correctness condition \(\mathcal{A}(P,G_0)=\top\) and \(U_{\mathcal{A}}(P)=\top\) of the corresponding algorithm. We introduce the notion of a support of \(\mathcal{A}\), which is the set of distributions and graphs on which the resulting correctness condition of the corresponding algorithm (\(\mathcal{A}\) and \(U_{\mathcal{A}}\)) holds.

%
\begin{definition}[Support]\label{support}
    Given a class of graphs \(\mathbb{G}\), and a property \(\mathcal{A}\) relating distributions and graphs in \(\mathbb{G}\), the 
    \emph{support} of \(\mathcal{A}\)
    is given by the class of tuples given by
    \begin{equation*}
        \supp(\mathcal{A})=\ns{(P,G): G\in \mathbb{G}, \  \mathcal{A}(P,G)=\top \text{  and  } U_{\mathcal{A}}(P)=\top}.
    \end{equation*}
\end{definition}

\begin{example}
[Faithfulness]
\label{A1}
    Given any class of graphs \(\mathbb{G}\), let \(\mathcal{A}(P,G)=\top\) if \(P\) is faithful 
    w.r.t.\
    \(G\in \mathbb{G}\). 
    Since \(U_{\mathcal{A}}(P)=\top\) if \(\mathcal{A}(P,G_P)=\top\) for some \(G_P\), taking the conjunction with \(U_{\mathcal{A}}\) is trivial in this case; \(\mathcal{A}(P,G)=\top\) and \(U_{\mathcal{A}}(P)=\top\) is still equivalent to saying \(P\) is faithful 
    w.r.t.\
    \(G\). When property \(\mathcal{A}\) is faithfulness, we have \(\text{supp}(\mathcal{A})\) being all pairs \((P,G)\) such that \(G\in \mathbb{G}\) and distributions \(P\)  faithful to \(\mathcal{G}\). \erk
\end{example}
Given two properties $\mathcal{A}$ and  $\mathcal{B}$ relating distributions and graphs, the resulting correctness condition of the algorithm that \(\mathcal{A}\) corresponds to is \emph{stronger} than the correctness condition of the algorithm that \(\mathcal{B}\) corresponds to, if 
\begin{align*}
    \mathcal{A}(P,G_0)=\top \text{ and } U_{\mathcal{A}}(P)=\top \Rightarrow \mathcal{B}(P,G_0)=\top \text{ and } U_{\mathcal{B}}(P)=\top.
\end{align*}
Indeed, logically, the stronger correctness condition implies the weaker condition.

Since \(G_0\in \mathbb{G}\) is the unknown true causal graph and \(P\) can be any input distribution, the implication should hold for all distributions \(P\) and graphs \(G\in \mathbb{G}\). Thus, using the support, the assertion that the corresponding algorithm of property \(\mathcal{A}\) has a stronger correctness condition than the corresponding algorithm of property \(\mathcal{B}\), can be simply expressed as \(\supp(\mathcal{A})\subseteq \supp(\mathcal{B})\).


From Examples \ref{A1}, \ref{smr}, and \ref{A.ex}, we see that in general given properties \(\mathcal{A}\) and $\mathcal{B}$ relating distributions and graphs, the assertion that the property \(\mathcal{A}\) by \emph{itself} is stronger than the  property \(B\), i.e. \(\mathcal{A}(P,G)=\top\Rightarrow \mathcal{B}(P,G)=\top\) for all \(P\) and $G\in \mathbb{G}$, does not necessarily imply \(\supp(\mathcal{A})\subseteq \supp(\mathcal{B})\) nor \(\supp(\mathcal{B})\subseteq \supp(\mathcal{A})\). However, under some conditions, we can relate \(\supp(\mathcal{A})\) and \(\supp(\mathcal{B})\), thus comparing the strength of the correctness condition of the algorithms corresponding to properties \(\mathcal{A}\) and \(\mathcal{B}\) respectively.

\begin{proposition}[Reversing the implication]
\label{reverse}
    Let  \(\mathcal{A}\) and \(\mathcal{B}\) be properties relating distributions and graphs in \(\mathbb{G}\).  Suppose that  for all \(P\), we have
    \begin{equation*}
        \exists G_P\in \mathbb{G} \text{ such that } \mathcal{A}(P,G_P)=\top\quad \iff \quad \exists G'_P\in \mathbb{G} \text{ such that } \mathcal{B}(P,G'_P)=\top
    \end{equation*}
    and \(\mathcal{A}\) is a class property. If \(\mathcal{A}(P,G)=\top\Rightarrow \mathcal{B}(P,G)=\top\) for all \(P\) and \(G\in \mathbb{G}\), then \textnormal{supp}\((\mathcal{B})\subseteq\) \textnormal{supp}\((\mathcal{A})\).
\end{proposition}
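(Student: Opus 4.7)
The plan is to take an arbitrary pair $(P,G)\in\supp(\mathcal{B})$ and show that $(P,G)\in\supp(\mathcal{A})$. Unfolding Definition \ref{support}, this splits into two subgoals: first, $\mathcal{A}(P,G)=\top$; second, $U_{\mathcal{A}}(P)=\top$. The hypotheses available are $\mathcal{B}(P,G)=\top$, $U_{\mathcal{B}}(P)=\top$, the biconditional on the existence of $\mathcal{A}$- and $\mathcal{B}$-witnesses, the class-property hypothesis on $\mathcal{A}$, and the pointwise implication $\mathcal{A}(P,G')=\top\Rightarrow\mathcal{B}(P,G')=\top$.

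For the first subgoal, I would start by using the biconditional hypothesis to produce a witness. Since $\mathcal{B}(P,G)=\top$, some graph in $\mathbb{G}$ witnesses $\mathcal{B}$ for $P$, so the biconditional yields a graph $G_P\in\mathbb{G}$ with $\mathcal{A}(P,G_P)=\top$. Apply the pointwise implication to conclude $\mathcal{B}(P,G_P)=\top$. Now both $G$ and $G_P$ satisfy $\mathcal{B}$ with respect to $P$, so by $U_{\mathcal{B}}(P)=\top$ they lie in the same Markov equivalence class. Invoking the class-property hypothesis on $\mathcal{A}$ transfers $\mathcal{A}(P,G_P)=\top$ across this MEC to give $\mathcal{A}(P,G)=\top$.

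For the second subgoal, I would take any two graphs $G_1,G_2\in\mathbb{G}$ with $\mathcal{A}(P,G_i)=\top$ for $i=1,2$ and show they are Markov equivalent. The pointwise implication yields $\mathcal{B}(P,G_i)=\top$ for each $i$, and then $U_{\mathcal{B}}(P)=\top$ immediately forces $G_1$ and $G_2$ into the same MEC, establishing $U_{\mathcal{A}}(P)=\top$.

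The argument is essentially a bookkeeping exercise in the three hypotheses, and I do not anticipate any serious obstacle. The most delicate point is seeing that both directions of the existence-biconditional are not actually needed: only the direction $\exists G'_P(\mathcal{B}(P,G'_P)=\top)\Rightarrow\exists G_P(\mathcal{A}(P,G_P)=\top)$ is used to manufacture the witness $G_P$, while the reverse direction is already implied by the pointwise implication. The class-property hypothesis plays the essential structural role, since without it Markov equivalence of $G_P$ and $G$ would not allow us to transfer $\mathcal{A}(P,G_P)=\top$ to $\mathcal{A}(P,G)=\top$.
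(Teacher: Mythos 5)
Your proposal is correct and follows essentially the same route as the paper's proof: the same witness argument via the existence biconditional, transfer along the MEC using $U_{\mathcal{B}}$ and the class property of $\mathcal{A}$, and the same observation that the pointwise implication plus $U_{\mathcal{B}}(P)=\top$ forces $U_{\mathcal{A}}(P)=\top$. Your remark that only one direction of the biconditional is actually used is accurate but does not change the substance.
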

As an example, the conditions in Proposition \ref{reverse} apply when the algorithms corresponding to the properties \(\mathcal{A}\) and \(\mathcal{B}\) always return a graph given any input distribution. This can be seen with the SP algorithm \citep{UhlSP} which corresponds to the Sparsest Markov property, and always outputs a graph given any input distribution; this will be seen in Section \ref{mincompare}.
\begin{proposition}[Preserving the implication]
\label{preserve}
     Suppose that for properties \(\mathcal{A}\) and \(\mathcal{B}\) relating distributions and graphs in \(\mathbb{G}\), for all \(P \text{ such that } \mathcal{A}(P,G_P)=\top \text{ for some } G_P\in \mathbb{G}\), we have the equality
     \begin{equation*}
         \{G \in \mathbb{G}: \mathcal{A}(P,G)=\top\}=\{G \in \mathbb{G}:  \mathcal{B}(P,G)=\top\}.
     \end{equation*}
    If \(\mathcal{A}(P,G)=\top\Rightarrow \mathcal{B}(P,G)=\top\) for all \(P\) and \(G\in \mathbb{G}\), then \textnormal{supp}\((\mathcal{A})\subseteq\) \textnormal{supp}\((\mathcal{B})\).
\end{proposition}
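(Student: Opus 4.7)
The plan is to unfold the definition of $\supp(\mathcal{A})$ and then verify the three conditions for membership in $\supp(\mathcal{B})$ for an arbitrary element of $\supp(\mathcal{A})$. Specifically, I would fix an arbitrary pair $(P,G) \in \supp(\mathcal{A})$, which by Definition \ref{support} means $G \in \mathbb{G}$, $\mathcal{A}(P,G) = \top$, and $U_{\mathcal{A}}(P) = \top$, and then argue that $G \in \mathbb{G}$, $\mathcal{B}(P,G) = \top$, and $U_{\mathcal{B}}(P) = \top$ all follow.

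The first condition is immediate, and the second condition follows directly from the pointwise hypothesis $\mathcal{A}(P,G) = \top \Rightarrow \mathcal{B}(P,G) = \top$ applied to the fixed pair $(P,G)$. For the third condition, the key observation is that the hypothesis $\mathcal{A}(P,G) = \top$ itself guarantees that $P$ falls within the scope of the set-equality assumption, namely that there exists some $G_P \in \mathbb{G}$ (take $G_P = G$) with $\mathcal{A}(P, G_P) = \top$. Hence the hypothesis of the proposition gives
\[
\{G' \in \mathbb{G}: \mathcal{A}(P,G') = \top\} = \{G' \in \mathbb{G}: \mathcal{B}(P,G') = \top\}.
\]
Since $U_{\mathcal{A}}(P) = \top$ says that the left-hand set lies in a single Markov equivalence class, the same is true for the right-hand set, so $U_{\mathcal{B}}(P) = \top$, completing the argument.

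I do not expect a genuine obstacle here, since the proposition is essentially a bookkeeping statement about how the pointwise implication and the set equality together transfer both components of the $\supp(\cdot)$ condition. The one subtle point worth flagging in the write-up is that the set-equality assumption is stated conditionally (only on distributions $P$ for which $\mathcal{A}$ is realizable), and one must verify that this condition is met by every $P$ appearing in $\supp(\mathcal{A})$; this is where the hypothesis $\mathcal{A}(P,G) = \top$ in the definition of $\supp(\mathcal{A})$ is used. Note also, in contrast to Proposition \ref{reverse}, that we do not need $\mathcal{A}$ to be a class property, and the direction of inclusion matches the direction of the pointwise implication, which is the expected behavior.
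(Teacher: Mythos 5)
Your proof is correct and follows essentially the same route as the paper's: apply the pointwise implication to get $\mathcal{B}(P,G)=\top$, then use the set equality (whose applicability is witnessed by $G$ itself) to transfer $U_{\mathcal{A}}(P)=\top$ to $U_{\mathcal{B}}(P)=\top$. Your remark that $\mathcal{A}$ need not be a class property is also accurate---the paper's proof mentions that assumption in its opening line but never uses it.
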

Propositions \ref{reverse} and \ref{preserve} try to address the question: ``If, by themselves, property \(\mathcal{A}\) is stronger than property \(B\), then 
under what conditions on \(\mathcal{A}\) and \(\mathcal{B}\) can one deduce which of the corresponding algorithms has a stronger correctness conditions?'' These propositions will be used in Section \ref{mincompare} to compare correctness conditions of algorithms that correspond to the minimality notions from Section \ref{assume}.

Note that after including background knowledge, the property \(\mathcal{A}\wedge \mathcal{E}\) is always stronger than the property \(\mathcal{A}\). We can then apply Proposition \ref{reverse} to provide conditions when including background knowledge \(\mathcal{E}\) weakens the correctness conditions of the corresponding algorithm of property \(\mathcal{A}\); i.e. \textnormal{supp}\((\mathcal{A})\subseteq \textnormal{supp}(\mathcal{A}\wedge\mathcal{E})\). This is summarised as the following corollary.

\begin{corollary}[Weakening correctness conditions using background knowledge] 
\label{weaktoGP}
Let \(\mathcal{A}\) be a class property.
Let  \(\mathcal{E}\) be another property.  If   for all \(P\), we have
\begin{align*}
        \exists G_P\in \{G: \mathcal{A}(P,G)=\top\} \text{ such that } \mathcal{E}(P,G_P)=\top,
\end{align*}
then \textnormal{supp}\((\mathcal{A})\subseteq \textnormal{supp}(\mathcal{A}\wedge\mathcal{E})\).
\end{corollary}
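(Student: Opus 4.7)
The plan is to obtain this corollary as a direct application of Proposition \ref{reverse}, invoking it with its property $\mathcal{A}$ replaced by the conjunction $\mathcal{A}\wedge\mathcal{E}$ and its property $\mathcal{B}$ replaced by $\mathcal{A}$. Under this substitution, the proposition's conclusion $\supp(\mathcal{B})\subseteq\supp(\mathcal{A})$ specialises to exactly the desired $\supp(\mathcal{A})\subseteq\supp(\mathcal{A}\wedge\mathcal{E})$, so everything reduces to verifying the three hypotheses of Proposition \ref{reverse} for this choice.

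First, the pointwise implication $(\mathcal{A}\wedge\mathcal{E})(P,G)=\top \Rightarrow \mathcal{A}(P,G)=\top$ holds trivially by the definition of conjunction, for every $P$ and every $G\in\mathbb{G}$. Second, I would verify the existence biconditional: for all $P$, $\exists G_P$ with $(\mathcal{A}\wedge\mathcal{E})(P,G_P)=\top$ if and only if $\exists G'_P$ with $\mathcal{A}(P,G'_P)=\top$. The forward direction is immediate. For the reverse direction, whenever $\{G:\mathcal{A}(P,G)=\top\}$ is non-empty, the hypothesis of the corollary supplies some $G_P$ in that set with $\mathcal{E}(P,G_P)=\top$, whence $(\mathcal{A}\wedge\mathcal{E})(P,G_P)=\top$.

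The remaining hypothesis of Proposition \ref{reverse} demands that the conjunction $\mathcal{A}\wedge\mathcal{E}$ itself be a class property, and this is where I expect the principal subtlety to lie: the corollary assumes only that $\mathcal{A}$ is a class property, while $\mathcal{E}$ is left unconstrained. The clean route is to observe that the forms of background knowledge $\mathcal{E}$ considered in this paper (such as required/forbidden edges as in \citet{meek}, or structural constraints like the absence of colliders as in Example \ref{degenex}) are themselves invariant under Markov equivalence and hence class properties, so that the conjunction of two class properties is again a class property. Once this point is settled, all three hypotheses of Proposition \ref{reverse} are in place, and the stated inclusion $\supp(\mathcal{A})\subseteq\supp(\mathcal{A}\wedge\mathcal{E})$ follows immediately.
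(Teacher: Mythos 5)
Your reduction to Proposition \ref{reverse} is the same route the paper takes, and your role assignment is the correct one: to extract $\textnormal{supp}(\mathcal{A})\subseteq\textnormal{supp}(\mathcal{A}\wedge\mathcal{E})$ from that proposition one must put $\mathcal{A}\wedge\mathcal{E}$ in the role of the proposition's class property $\mathcal{A}$ and the corollary's $\mathcal{A}$ in the role of $\mathcal{B}$, exactly as you do. (The paper's own proof instead substitutes ``$\mathcal{A}\wedge\mathcal{E}$ in place of $\mathcal{B}$''; under that substitution the required pointwise implication $\mathcal{A}(P,G)=\top\Rightarrow(\mathcal{A}\wedge\mathcal{E})(P,G)=\top$ fails and the proposition would deliver the reverse containment, so your version is the repaired one.) Your checks of the pointwise implication and of the existence biconditional are both correct.

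The subtlety you isolate---that the application needs $\mathcal{A}\wedge\mathcal{E}$ to be a class property, while the corollary constrains only $\mathcal{A}$---is a genuine gap in the statement itself, not merely in your write-up, and your patch (assume $\mathcal{E}$ is Markov-invariant) is an added hypothesis rather than something derivable from what is given. Without it the conclusion fails: on two nodes let $\mathcal{A}$ be the minimally Markovian property and let $\mathcal{E}$ forbid the directed edge $2\fra 1$. For every $P$ some $\mathcal{A}$-graph satisfies $\mathcal{E}$ (take $1\fra 2$ when the variables are dependent and the empty graph otherwise), so the corollary's hypothesis holds; yet for dependent $P$ the pair $(P,\,1\fla 2)$ lies in $\textnormal{supp}(\mathcal{A})$ but not in $\textnormal{supp}(\mathcal{A}\wedge\mathcal{E})$, since $\mathcal{E}(P,1\fla 2)\neq\top$. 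Note also that your justification of the patch overreaches slightly: Meek-style background knowledge includes \emph{oriented} required and forbidden edges, which are not invariant under Markov equivalence, so not every $\mathcal{E}$ the paper has in mind is a class property (although ``no colliders'' is, by Proposition \ref{Markovchar}). The honest conclusion is that the corollary should additionally assume $\mathcal{E}$ (equivalently, $\mathcal{A}\wedge\mathcal{E}$) is a class property; with that hypothesis added, your proof is complete and matches the intended argument.
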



%
%

\section{Applications to DAGs}
\label{type}
Recall that 
given a constraint-based causal learning algorithm, if we can identify the corresponding property that relates the output set of graphs and the input distribution, Theorem \ref{mainth} allows us to obtain the correctness condition of the algorithm. In this section, restricting our attention to \(\mathbb{G}\) being the class of  DAGs, and focusing on causal learning algorithms that use an orientation rule to identify colliders and non-colliders in a skeleton such as the PC algorithm and ``Me-LoNS'' \citep{teh}, we will frame these orientation rules as local constructed properties to identify the corresponding property of the algorithm. In particular, we find the exact correctness conditions for the PC algorithm.


\subsection{Local  Properties}

Consider an algorithm that takes in an input distribution $P$ and constructs DAGs locally by mapping v-configurations into colliders and non-colliders via orientation rules, we will describe how $P$ relates to the v-configurations of the output graphs locally, by defining local constructed properties for DAGs.


As in Section \ref{theory},
we write  \(\mathfrak{v}(P,i\sim k\sim j)=\top\) if given a distribution \(P\) and a v-configuration \(i\sim k \sim j\) in \(\sk(P)\), the distribution \(P\) satisfies the property \(\mathfrak{v}\) w.r.t. \(i\sim k \sim j\); we refer to such a $\mathfrak{v}$ as a \emph{local property}. Specifically, when the v-configuration is a non-collider/collider, we will denote the local property as $\mathfrak{n}$/$\mathfrak{c}$ respectively. From this, we define the following class of properties relating distributions and graphs:

\begin{definition}[\(\mathcal{V}_{\mathfrak{n,c}}\)]
\label{def-vnc}
    Given properties \(\mathfrak{n}\) and \(\mathfrak{c}\) relating distributions and v-configurations in \(\sk(P)\), we write \(\mathcal{V}_{\mathfrak{n,c}}(P,G)=\top\) if:
    \begin{enumerate}[1]
        \item
        \label{item1-vnc}
        \(\sk(P)=\sk(G)\).
        \item For every v-configuration \(i\sim k \sim j\) in \(\sk(P)\), we have
        \begin{itemize}
            \item
            \label{item2-vnc}
            \(i\sim k \sim j\) is a collider in \(G \Rightarrow \mathfrak{c}(P,i\sim k\sim j)=\top\).
            \item
            \(i\sim k \sim j\) is a non-collider in \(G \Rightarrow \mathfrak{n}(P,i\sim k\sim j)=\top\).
        \end{itemize}
    \end{enumerate}
\end{definition}
%
%

%
The property
\(\mathcal{V}_{\mathfrak{n,c}}\) can be thought of as a global property relating $P$ to the whole graph $G$ when each non-collider/collider satisfies the  local property $\mathfrak{n}/\mathfrak{c}$, respectively. This happens when an algorithm outputs graphs using orientation rules constructed from 
$\mathfrak{n}$ and $\mathfrak{c}$ to assign v-configurations (see (\(\mathfrak{n,c})\)-orientation rule in Definition \ref{ncorientrule} ).

With results characterising Markov equivalence from \citet{pearl}, it is straightforward to see that \(\mathcal{V}_{\mathfrak{n,c}}\) is a class property. If the properties \(\mathfrak{n}\) and \(\mathfrak{c}\) do not simultaneously hold, then we have \(\mathcal{V}_{\mathfrak{n,c}}\)-uniqueness.

\begin{proposition}[\(\mathcal{V}_{\mathfrak{n,c}}\) is a class property and \(\mathcal{V}_{\mathfrak{n,c}}\)-uniqueness]
\label{Auniq}
Let \(\mathfrak{n}\) and \(\mathfrak{c}\) be local  properties.  
\begin{enumerate}
    \item \(\mathcal{V}_{\mathfrak{n,c}}\) is a class property.
    \item If \(\text{for all } P\) and all v-configurations \(i\sim k\sim j\) in \(\sk(P)\), we do \textbf{not} have
    \begin{equation}
    \label{double-T}
        \mathfrak{n}(P,i\sim k\sim j)=\top \text{ and } \mathfrak{c}(P,i\sim k\sim j)=\top,
    \end{equation} then  \(U_{\mathcal{V}_{\mathfrak{n,c}}}(P)=\top\) for all \(P\) such that \(\mathcal{V}_{\mathfrak{n,c}}(P,G_P)=\top\) for some \(G_P\).
\end{enumerate}
\end{proposition}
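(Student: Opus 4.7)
The plan is to reduce both statements to the Verma--Pearl characterization of Markov equivalence for DAGs (cited as \citet{pearl} in the text): two DAGs are Markov equivalent if and only if they share the same skeleton and the same colliders at v-configurations.

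For part 1, I would take Markov-equivalent DAGs $G_1,G_2$ and any distribution $P$. The characterization gives $\sk(G_1)=\sk(G_2)$, so clause 1 of Definition~\ref{def-vnc} evaluates identically for $(P,G_1)$ and $(P,G_2)$. Moreover, every v-configuration $i\sim k\sim j$ is a collider in $G_1$ if and only if it is a collider in $G_2$, so the two implications in clause 2 reduce to the same local statements about $P$. Therefore $\mathcal{V}_{\mathfrak{n,c}}(P,G_1)=\top \iff \mathcal{V}_{\mathfrak{n,c}}(P,G_2)=\top$, showing that $\mathcal{V}_{\mathfrak{n,c}}$ is a class property.

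For part 2, I would proceed by contradiction. Fix $P$ with $\mathcal{V}_{\mathfrak{n,c}}(P,G_P)=\top$ for some $G_P$, and let $G$ be any DAG satisfying $\mathcal{V}_{\mathfrak{n,c}}(P,G)=\top$. By clause 1, both $G_P$ and $G$ have skeleton $\sk(P)$, so in particular they share the same set of v-configurations. If $G$ and $G_P$ were not Markov equivalent, Verma--Pearl would yield a v-configuration $i\sim k\sim j$ that is a collider in one of the two graphs and a non-collider in the other. Applying clause 2 to each graph at this v-configuration forces $\mathfrak{c}(P,i\sim k\sim j)=\top$ and $\mathfrak{n}(P,i\sim k\sim j)=\top$ simultaneously, contradicting the hypothesis \eqref{double-T}. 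Hence every DAG $G$ with $\mathcal{V}_{\mathfrak{n,c}}(P,G)=\top$ is Markov equivalent to $G_P$, giving $U_{\mathcal{V}_{\mathfrak{n,c}}}(P)=\top$.

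No significant obstacle arises once Verma--Pearl is invoked correctly; the only subtle point is that the cross-application of clause 2 to two different graphs at the \emph{same} disagreeing v-configuration is what simultaneously produces $\mathfrak{n}=\top$ and $\mathfrak{c}=\top$, which is exactly the situation ruled out by the hypothesis in \eqref{double-T}.
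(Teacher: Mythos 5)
Your proof is correct and follows essentially the same route as the paper's: part 1 via the Verma--Pearl characterisation showing skeleton and collider agreement, and part 2 by contradiction, locating a v-configuration that is a collider in one graph and a non-collider in the other so that both \(\mathfrak{n}\) and \(\mathfrak{c}\) hold simultaneously, contradicting \eqref{double-T}. No further comment is needed.
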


From local properties \(\mathfrak{n}\) and \(\mathfrak{c}\), we  define the \(\mathfrak{(n,c)}\)-\emph{orientation rule} which encapsulates many orientations rules used in constraint-based learning to assign colliders and non-colliders. 
\begin{definition}[\(\mathfrak{(n,c)}\)-orientation rule w.r.t. \(P\)]\label{ncorientrule}
    Given a distribution $P$, for all v-configurations \(i\sim k\sim j\) in \(\sk(P)\), the \(\mathfrak{(n,c)}\)-\emph{orientation rule} w.r.t.\ 
    \(P\) assigns:
    \begin{enumerate}
        \item \(i\sim k \sim j\) as a collider if \(\mathfrak{n}(P,i\sim k\sim j)\neq \top\).
        \item \(i\sim k \sim j\) as a non-collider if \(\mathfrak{c}(P,i\sim k\sim j)\neq \top\).
        \item \(i\sim k \sim j\) as unassigned otherwise.
    \end{enumerate}
\end{definition}

Under some natural conditions on the properties \(\mathfrak{n}\) and \(\mathfrak{c}\), the corresponding \(\mathfrak{(n,c)}\)-orientation  rule 
w.r.t.\ 
\(P\) is both well-defined, so that no v-configurations are simultaneously assigned as a collider and non-collider,  and characterises the set of all graphs such that property \(\mathcal{V}_{\mathfrak{n,c}}\) holds w.r.t. \(P\).
\begin{proposition}[The \(\mathfrak{(n,c)}\)-orientation rule w.r.t. \(P\) is well-defined and is a characterisation of \(\mathcal{V}_{\mathfrak{n,c}}\)]
\label{Achar}
If for all distributions $P$ and all $i\sim k\sim j$ in  $\sk(P)$, we have
    \begin{equation*}
        \mathfrak{n}(P,i\sim k\sim j)=\top \textnormal{ or } \mathfrak{c}(P,i\sim k\sim j)=\top,
    \end{equation*} then:
\hspace{2cm}
    \begin{enumerate}
        \item The \(\mathfrak{(n,c)}\)-orientation rule w.r.t. \(P\) is well-defined.
        \item  Given a distribution \(P\) and a graph \(G\), such that \(\sk(P)=\sk(G)\), the following are equivalent:
    \begin{enumerate}
        \item 
        If \(i\sim k\sim j \text{ is assigned to be a collider/non-collider} \Rightarrow i\sim k\sim j \text{ is a collider/non-collider in } G\).
        \item \(\mathcal{V}_{\mathfrak{n,c}}(P,G)=\top\).
    \end{enumerate}
    \end{enumerate}
\end{proposition}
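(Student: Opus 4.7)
The plan is to establish the two claims by purely logical manipulation of the defining conditions; the assumption
\[
\mathfrak{n}(P,i\sim k\sim j)=\top \textnormal{ or } \mathfrak{c}(P,i\sim k\sim j)=\top
\]
is the only ingredient needed beyond unpacking Definitions \ref{def-vnc} and \ref{ncorientrule}.

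For part 1, I would argue by contradiction. Suppose some v-configuration $i\sim k\sim j$ in $\sk(P)$ is simultaneously assigned as a collider and a non-collider by the $\mathfrak{(n,c)}$-orientation rule. By clauses (1) and (2) of Definition \ref{ncorientrule}, this forces $\mathfrak{n}(P,i\sim k\sim j)\neq\top$ and $\mathfrak{c}(P,i\sim k\sim j)\neq\top$, directly contradicting the standing assumption. Hence no v-configuration receives a double assignment, so the rule is well-defined.

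For part 2, assume $\sk(P)=\sk(G)$ throughout (this is clause (1) in the definition of $\mathcal{V}_{\mathfrak{n,c}}$ and is a hypothesis of the characterisation). For (a)$\Rightarrow$(b), suppose $i\sim k\sim j$ is a collider in $G$; I would show $\mathfrak{c}(P,i\sim k\sim j)=\top$. If instead $\mathfrak{c}(P,i\sim k\sim j)\neq\top$, clause (2) of the rule assigns this v-configuration as a non-collider, and then (a) forces it to be a non-collider in $G$, contradicting our assumption. The non-collider case is symmetric, using clause (1) together with the standing assumption to conclude $\mathfrak{n}(P,i\sim k\sim j)=\top$. Together these verify clause (2) in Definition \ref{def-vnc}, so $\mathcal{V}_{\mathfrak{n,c}}(P,G)=\top$.

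For (b)$\Rightarrow$(a), suppose $\mathcal{V}_{\mathfrak{n,c}}(P,G)=\top$ and that $i\sim k\sim j$ is assigned as a collider by the rule. By clause (1) of Definition \ref{ncorientrule} this means $\mathfrak{n}(P,i\sim k\sim j)\neq\top$, and the contrapositive of the non-collider implication in clause (2) of Definition \ref{def-vnc} gives that $i\sim k\sim j$ is not a non-collider in $G$, so it is a collider in $G$. The argument for an assigned non-collider is analogous. I do not expect any real obstacle here: the symmetric use of the two clauses in the definitions makes every step essentially a contrapositive, and the well-definedness from part 1 guarantees the bookkeeping in part 2 is consistent.
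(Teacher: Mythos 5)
Your proposal is correct and follows essentially the same route as the paper: part 1 is the same De Morgan/contradiction observation, and part 2 unpacks Definitions \ref{def-vnc} and \ref{ncorientrule} in the same way, with your (a)$\Rightarrow$(b) direction merely phrased as a contradiction rather than the paper's explicit case split over assigned versus unassigned v-configurations. No gaps.
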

Given the algorithm using the \((\mathfrak{n,c})\)-orientation rule.
Item 2a in Proposition \ref{Achar} describes the output set \(\boldsymbol{G}(P)\) of the algorithm, thus Item 2 in Proposition \ref{Achar} states that property \(\mathcal{V}_{\mathfrak{n,c}}\) corresponds to the algorithm.

Given \(\sk(P)=\sk(G)\), we can set up a well-defined orientation rule from an algorithm as an \((\mathfrak{n,c})\)-orientation rule using Definition \ref{ncorientrule}, by letting  
each of 
 $\mathfrak{n}$ and $\mathfrak{c}$ 
be the negation of the rule used to assign colliders/non-colliders respectively. If the resulting local properties 
$\mathfrak{n}$ and $\mathfrak{c}$
satisfy the condition in Proposition \ref{Achar}, then using Proposition \ref{Achar}, we have that the resulting 
property
\(\mathcal{V}_{\mathfrak{n,c}}\) corresponds to the algorithm using the orientation rule. In Section \ref{PCcondssec}, this will be applied to the orientation rule of the PC algorithm to obtain the corresponding property, which is then used to derive exact correctness conditions via Theorem \ref{mainth}.

\begin{remark}\label{cav1}
   The  \((\mathfrak{n,c})\)-orientation rule 
   w.r.t.\ 
   \(P\) is defined to assign colliders when \(\mathfrak{n}\) (the local property for non-colliders) does \emph{not} hold instead of when \(\mathfrak{c}\) (the local property for colliders) holds; likewise when assigning non-colliders. This is to capture \emph{all} graphs \(G\) such that \(\mathcal{V}_{\mathfrak{n,c}}(P,G)=\top\); under the condition in Proposition \ref{Achar},  v-configurations \(i\sim k\sim j\) in sk\((G)\) that are: 
    \begin{enumerate}
        \item assigned as colliders/non-colliders cannot be non-colliders/colliders in \(G\), otherwise this will contradict property \(\mathcal{V}_{\mathfrak{n,c}}\);
        \item 
        unassigned can satisfy both \(\mathfrak{n}(P,i\sim k\sim j)=\top\) and \(\mathfrak{c}(P,i\sim k\sim j)=\top\), simultaneously.
    \end{enumerate}
    Otherwise, the output of the orientation rule may not capture all such graphs \(G\) such that \(\mathcal{V}_{\mathfrak{n,c}}(P,G)=\top\), since some of the assigned colliders could potentially be non-colliders as well. This allows us to establish the correspondence of \(\mathcal{V}_{\mathfrak{n,c}}\) to the algorithm which uses the \((\mathfrak{n,c})\)-orientation rule. 
    \erk
\end{remark}
%


\begin{example}
    We substitute \(\mathfrak{n}\) and \(\mathfrak{c}\) as follows:
\begin{itemize}
    \item
    \(\mathfrak{n}(P,i\sim k\sim j)=\top\) if \(\forall C \subseteq \vijk\), 
    we have \(i\ci j\cd C\Rightarrow i\ci j\cd C\cup \ns{k}\).
    \item
    \(\mathfrak{c}(P,i\sim k\sim j)=\top\) if \(\exists C \subseteq \vijk\) 
    such that  
    \(i\ci j\cd C\).
\end{itemize}
Then \(\mathcal{V}_{\mathfrak{n,c}}\) is the V-OUS and collider-stable condition, and the \(\mathfrak{(n,c)}\)-orientation rule is the V-OUS and collider-stable orientation rule in \citet[Definition 10]{teh}. 
\erk
\end{example}

\subsection{Exact correctness Conditions for the PC algorithm}\label{PCcondssec}
Depending on the computational implementation of the orientation rules of the PC algorithm \citep{doi:10.1177/089443939100900106}, which all give the same output under the faithfulness assumption, we can obtain necessary and sufficient conditions for the PC algorithm using Theorem \ref{mainth}.  The  recent \verb|causal-learn| package  in Python \citep{python} contains such implementations.

Throughout this subsection, we let the set of output graphs of the PC algorithm, denoted as \(\boldsymbol{G}(P)\), be the set of DAGs in the Markov equivalence class represented by the CP-DAG output. We also assume that adjacency faithfulness holds, \(\sk(G)=\sk(P)\) for every graph \(G\in \boldsymbol{G}(P)\).  As such, the property \(\mathcal{V}\) that corresponds to the PC algorithm 
has to at least satisfy \(\sk(P)= \sk(G)\) for given \(P\) and \(G\); i.e. if \(\mathcal{V}(P,G)=\top\), then \(\sk(P)= \sk(G)\).

\begin{proof}[PC orientation rules]
\phantom\qedhere
\end{proof}

Depending on the computational implementation of the  PC algorithm, one of the following orientation rules 
is employed for  all v-configurations \(i\sim k \sim j\) in \(\sk(P)\). 
%

%
%
%
%

\begin{enumerate}

        \item If \(\forall\) \(C \subseteq \vij\) s.t.\ \(i\ci j\cd C\), we have \(k\not \in C\), then  assign \(i\sim k\sim j\) as a collider; otherwise assign as a non-collider.
        \item If \(\forall\) \(C \subseteq \vij\) s.t.\ \(i\ci j\cd C\), we have \(k \in C\), then assign \(i\sim k\sim j\) as a non-collider; otherwise assign as a collider.
        \item          \begin{enumerate}[(a)]
        \item 
        If  \(\forall\) \(C \subseteq \vij\) s.t.\ \(i\ci j\cd C\), we have  \(k \not \in C\), then assign \(i\sim k\sim j\) as a collider. 
            \item 
        If \(\forall\) \(C \subseteq \vij\) s.t.\ \(i\ci j\cd C\), we have \(k \in C\), then assign \(i\sim k\sim j\) as a non-collider.
        \item 
        Otherwise, leave \(i\sim k\sim j\) is unassigned. 
        \end{enumerate} 
\end{enumerate} 

\begin{proof}[Corresponding PC properties]
\phantom\qedhere
\end{proof}

Note that each orientation rule is an example of an  \(\mathfrak{(n,c)}\)-orientation rule 
w.r.t.\ 
the
input distribution \(P\), and we can convert this orientation rule into local properties \(\mathfrak{n}\) and \(\mathfrak{c}\), and thus the corresponding property \(\mathcal{V}_{\mathfrak{n,c}}\) from negating the orientation rule used to assign colliders and non-colliders respectively (Proposition \ref{Achar}). For each of the orientation rule \(I\) which defines an algorithm, the corresponding property \(\mathcal{V}_{\mathfrak{n,c}}\) is denoted as \(\mathcal{V}_I\) as follows.   For \(P\) and \(G\), and v-configurations \(i\sim k\sim j\) in \(G\), we set:
\begin{enumerate}
    \item \(\mathcal{V}_1(P,G)=\top\), 
    \quad
    if 
    \(\sk(P)=\sk(G)\) \quad and \(\quad i\sim k\sim j\) is a collider \(\iff\) \(\forall\) \(C \subseteq \vij\) s.t.\ \(i\ci j\cd C\), we have \(k\not \in C\).
    \item \(\mathcal{V}_2(P,G)=\top\), 
    \quad
    if
    \(\sk(P)=\sk(G)\) \quad and \(\quad i\sim k\sim j\) is a non-collider \(\iff\) \(\forall\) \(C \subseteq \vij\) s.t.\ \(i\ci j\cd C\), we have \(k\in C\). 
    \item \(\mathcal{V}_3(P,G)=\top\), 
    \quad
    if
    \(\sk(P)=\sk(G)\) \quad and \quad \(\exists\) \(C \subseteq \vij\) s.t.\ \(i\ci j\cd C\), we have \(\quad i\sim k\sim j\) is a non-collider \(\iff\) \, \(k\in C\). 
\end{enumerate}

\begin{remark}
    If \(P\) satisfies the \emph{restricted faithfulness} assumption w.r.t.\ 
    some graph \(G\), all the orientation rules would  give the same output as the orientation rule in the conservative PC algorithm \citep{rams}. In particular, PC orientation rule 3 \emph{is} the orientation rule in conservative PC. 
    
    Note that \(\mathcal{V}_3(P,G)=\top\) is implied if \(P\) satisfies the 
    pairwise Markov property w.r.t.\ \(G\) and is thus very weak. However,  as highlighted in Section \ref{comp} a weak property need not imply a weak correctness condition for the corresponding algorithm. Thus, the condition required for conservative PC to be correct does not have to be  stronger or weaker than the condition for the PC variants to be correct; indeed this will be verified in Section \ref{condegs}.

    Note that for \(\mathcal{V}_3\), the quantifier \(\exists\) \(C \subseteq \vij\) s.t. \(i\ci j\cd C\) applies to both sides of the implication:
    \begin{center}
        \( i\sim k\sim j\) is a non-collider \(\iff\) \, \(k\in C\),  
    \end{center}
    thus distinguishing it from \(\mathcal{V}_1\) and \(\mathcal{V}_2\) where the quantifier only applies on one side of the implication. \erk
\end{remark}

For $I \in \ns{1,2}$, by Proposition \ref{Auniq}, we have that \(\mathcal{V}_I\)-uniqueness always holds, \(U_{\mathcal{V}_I}(P)=\top\), if \(\mathcal{V}_I\) holds, \(\mathcal{V}_I(P,G_P)=\top\) for some \(G_P\); thus taking the conjunction of \(\mathcal{V}_I\) with \(U_{\mathcal{V}_I}\) does not change \(\mathcal{V}_I\).

By Proposition \ref{Auniq}, each of the corresponding PC properties, $\mathcal{V}_1$, $\mathcal{V}_2$, and $\mathcal{V}_3$  are class properties.   Hence, by Theorem \ref{mainth}, we obtain the following sufficient and necessary conditions for each variant of the PC algorithm depending on the orientation rule used.
\begin{proposition}[Exact correctness conditions for PC]
\label{PCconds}
Let $P$ be the distribution for a true causal graph $G_0$.  Let $\mathcal{V}_1,\mathcal{V}_2$, and $\mathcal{V}_3$ be the corresponding PC properties to the various PC orientation rules. 
Then
\[  \mathcal{V}_I(P,G_0)=\top \iff  \text{ the variant of PC algorithm which uses orientation rule \(I\) is correct} \quad \text{for } I\in \ns{1,2}\]
and
\[\mathcal{V}_3(P,G_0)=\top \text{ and } U_{\mathcal{V}_3}(P)=\top \iff \text{the conservative PC algorithm is correct.} \]
\end{proposition}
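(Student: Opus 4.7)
The plan is to apply Theorem~\ref{mainth} (or equivalently Corollary~\ref{maincor}) once per PC variant, after exhibiting each PC orientation rule as an $(\mathfrak{n}, \mathfrak{c})$-orientation rule with corresponding global property $\mathcal{V}_{\mathfrak{n},\mathfrak{c}}$ equal to $\mathcal{V}_I$. For each rule $I$, I would read off the local properties $\mathfrak{n}_I, \mathfrak{c}_I$ by negating the assignment conditions in Definition~\ref{ncorientrule}, then check the hypotheses of Propositions~\ref{Achar} and~\ref{Auniq} to determine whether $\mathcal{V}_I$-uniqueness is automatic or must be imposed separately.

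For $I \in \{1,2\}$, consider rule~1: set $\mathfrak{n}_1(P, i\sim k\sim j) = \top$ iff there exists $C \subseteq \vij$ with $i \ci j \cd C$ and $k \in C$, and $\mathfrak{c}_1(P, i\sim k\sim j) = \top$ iff for every such $C$ we have $k \notin C$; rule~2 swaps the two. Because $i,j$ are non-adjacent in $\sk(P)$, some separating set exists, so $\mathfrak{n}_I$ and $\mathfrak{c}_I$ are logical complements and exactly one holds on every v-configuration. Hence both the premise of Proposition~\ref{Achar} (at least one is $\top$) and the premise of Proposition~\ref{Auniq} Item~2 (not both are $\top$) are met. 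Proposition~\ref{Achar} then identifies the output of the PC variant with rule $I$ with the set of DAGs $G$ satisfying $\mathcal{V}_I(P,G) = \top$; Proposition~\ref{Auniq} shows that $\mathcal{V}_I$ is a class property and that $U_{\mathcal{V}_I}(P) = \top$ automatically whenever $\mathcal{V}_I$ holds for some $G_P$. Corollary~\ref{maincor} therefore yields $\mathcal{V}_I(P, G_0) = \top \iff$ the algorithm is correct, since the conjunct $U_{\mathcal{V}_I}(P) = \top$ is absorbed into $\mathcal{V}_I(P, G_0) = \top$.

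For rule~3 (conservative PC), set $\mathfrak{n}_3(P, i\sim k\sim j) = \top$ iff $\exists C \subseteq \vij$ with $i \ci j \cd C$ and $k \in C$, and $\mathfrak{c}_3(P, i\sim k\sim j) = \top$ iff $\exists C \subseteq \vij$ with $i \ci j \cd C$ and $k \notin C$. Non-adjacency of $i,j$ in $\sk(P)$ again forces at least one of $\mathfrak{n}_3, \mathfrak{c}_3$ to be $\top$, so Proposition~\ref{Achar} applies and identifies $\mathcal{V}_3$ as the property corresponding to conservative PC; Proposition~\ref{Auniq} Item~1 gives that $\mathcal{V}_3$ is a class property. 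However both local properties can simultaneously be $\top$ whenever separating sets both containing and not containing $k$ exist, so the hypothesis of Proposition~\ref{Auniq} Item~2 fails and $U_{\mathcal{V}_3}$ need not be automatic. Corollary~\ref{maincor} then requires the explicit conjunction $\mathcal{V}_3(P, G_0) = \top$ and $U_{\mathcal{V}_3}(P) = \top$.

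The main obstacle is the bookkeeping step of verifying that the biconditionals in the definitions of $\mathcal{V}_1, \mathcal{V}_2, \mathcal{V}_3$ really match the two one-way implications of Definition~\ref{def-vnc} under the chosen $\mathfrak{n}_I, \mathfrak{c}_I$. For rules~1 and~2 this is immediate because $\mathfrak{n}_I$ and $\mathfrak{c}_I$ are complementary on every v-configuration in $\sk(P)$, so the $\iff$ collapses to the pair of implications. For rule~3 the v-configurations left unassigned by the orientation rule are precisely those on which both $\mathfrak{n}_3$ and $\mathfrak{c}_3$ hold, and by the biconditional in the definition of $\mathcal{V}_3$ these may be either colliders or non-colliders in any $G$ with $\mathcal{V}_3(P, G) = \top$, which is exactly what Remark~\ref{cav1} and Definition~\ref{def-vnc} permit; once this alignment is in hand, the biconditional in the proposition follows directly from Corollary~\ref{maincor}.
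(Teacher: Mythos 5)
Your proposal is correct and follows essentially the same route as the paper's proof: negate the assignment conditions of each orientation rule to obtain the local properties $\mathfrak{n}_I,\mathfrak{c}_I$, invoke Proposition \ref{Achar} to establish that $\mathcal{V}_I$ corresponds to the algorithm, invoke Proposition \ref{Auniq} for the class property and (for $I\in\{1,2\}$) automatic uniqueness, and then apply Theorem \ref{mainth} (your use of Corollary \ref{maincor} is an equivalent packaging). Your additional observation that $\mathfrak{n}_I$ and $\mathfrak{c}_I$ are logical complements for $I\in\{1,2\}$, which simultaneously discharges the hypotheses of both propositions, is a slightly more explicit justification than the paper gives, but it is the same argument.
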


It is known that the \emph{restricted faithfulness} condition is a sufficient correctness condition for the PC-algorithm \citep{rams}, but it appears that Proposition \ref{PCconds} is the first description of the sufficient and \emph{necessary} correctness conditions for the PC algorithm.

Section \ref{condegs} provides examples of distribution \(P\) and graph \(G_0\) such that the condition in Proposition \ref{PCconds} holds. 


\bmsubsection*{Proposition \ref{PCconds} under randomness of conditional independence testing}

Recall that we have assumed the availability of a conditional independence oracle; however, in practice there is randomness in the conditional independence testing step of the PC algorithm. To account for this,
we simulate data that satisfy the exact conditions $\mathcal{V}_1$ and $\mathcal{V}_3$ and $U_{\mathcal{V}_3}$ in Proposition \ref{PCconds}, but not restricted faithfulness---the weakest sufficient condition for the PC algorithm \citep{rams}. The distribution \(P\) is sampled from the following SCM with the DAG \(G\) from Figure \ref{counterexg}.

\begin{figure}[h]
    \centering
    \includegraphics[scale=0.7]{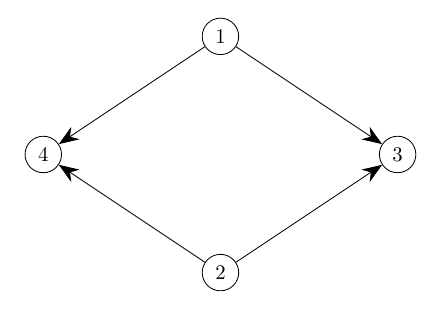}
    \caption{Counterexample graph for Examples \ref{v123ex}, \ref{exvuv3}, \ref{exa1a2}, \ref{exbub} and simulations for Proposition \ref{PCconds}.}
    \label{counterexg}
\end{figure}

\begin{align}
  \epsilon_i \stackrel{\text{i.i.d.}}{\sim} N(0,1), \text{ for }  
  i \in \{1,2,3,4\}\nonumber\\
  X_1 = \epsilon_1\nonumber\\
  X_2=  \epsilon_2\nonumber\\
  X_3=  2X_1-3X_2+\epsilon_3\nonumber\\
  X_4=   3X_1+2X_2+\epsilon_4\label{sem2}
\end{align}

It is not difficult to verify that $P$ satisfies \ref{1-2-3-4} and \(\mathcal{V}_1(P,G)=\top\) and \(\mathcal{V}_3(P,G)=\top\) and \(U_{\mathcal{V}_3}(P)=\top\).

From 
the
SCM given by \eqref{sem2}, we sample $1200$ test batches of $10000$ data points each. Variants of the PC algorithm that 
use 
orientation rule \(1\) and \(3\) are provided by the \verb|causal-learn| package in Python \citep{python}, by setting the \verb|uc_rule| parameter of the \verb|pc| function to be \(0\) and \(2\), respectively.

The corresponding PC algorithm was ran on the simulated data and the percentage of correct outputs Markov equivalent to the DAG in Figure \ref{counterexg} is then recorded, as shown in Table \ref{pcres}.

\begin{table}[h]
    \centering
    \begin{tabular}{cc}
      \toprule 
      \bfseries PC orient 1 & \bfseries PC orient 3\\
      \midrule 
      97\% & 95\%\\
      \bottomrule 
    \end{tabular}
    \caption{Percentage of correct outputs of PC algorithm using orientation rule \(1,3\) (PC orient 1, PC orient 3), with simulated data from SEM \ref{sem2} as input.}\label{pcres}
\end{table}

\subsubsection{Relating PC conditions}\label{condegs}
Having obtained exact correctness conditions, 
we provide basic relations of these conditions to some existing causal learning correctness conditions in the literature. 
These relations are summarised as follows. Let $P$ be a distribution and $G$ be a graph.

\begin{enumerate}[(1)]
    \item[(a)] \(P\) satisfies the correctness condition of Me-LoNS w.r.t.\ \(G\), which is denoted here as \(\mathcal{V}\) and \(U_{\mathcal{V}}\), where \(\mathcal{V}\) is the property:
    \begin{equation*}
\mathcal{V}(P,G)=\top \text{ if and only if } P \text { is adjacency faithful, and  V-OUS and collider-stable 
w.r.t.\
\(G\).}
\end{equation*}
    \item[(b)] \(P\) satisfies the SMR assumption w.r.t.\ \(G\).
    \item[(c)] \(\mathcal{V}_1(P,G)=\top\).
    \item[(d)] \(\mathcal{V}_2(P,G)=\top\).
    \item[(e)] \(\mathcal{V}_3(P,G)=\top\) and \(U_{\mathcal{V}_3}(P)=\top\).
\end{enumerate}
We have that (b) is implied by (e) and the remaining statements are mutually incomparable.

Via Examples \ref{v123ex} and \ref{ev1v2smr}, we see that all the correctness conditions in Proposition \ref{PCconds} are all mutually incomparable.   

\begin{example}[Neither \(\mathcal{V}_1\) nor \(\mathcal{V}_2\) is implied by \(\mathcal{V}_3\) and \(U_{\mathcal{V}_3}\)]\label{v123ex}
For \(\mathcal{V}_1\), let \(G\) be the graph from Figure \ref{counterexg} and let the distribution  \(P\) induce 
\begin{align*}
    J(P)=\{1\ci 2 \cd 3,\quad  3\ci 4\cd \{1,2\},\quad  1\ci 2\}.
\end{align*}
Then \(U_{\mathcal{V}_3}(P)=\top\) since all v-configurations except \(1\sim 3\sim 2\) are constrained to be non-colliders in order to not violate \(\mathcal{V}_3\), leaving \(1\sim 3\sim 2\) to be constrained as a collider. However, we see that \(\mathcal{V}_1(P,G)\neq \top\) when considering 
the collider \(1\fra 3\fla 2\). Note that this example of \(P\) is not singleton-transitive \citep{zbMATH05645279}.

For \(\mathcal{V}_2\), again let \(G\) be the graph from Figure \ref{counterexg} and let the distribution \(P\) induce
         $$J(P)=\{3\ci 4,\quad 3\ci 4\cd \{1,2\},\quad 1\ci 2\}.$$
Then we see that \(U_{\mathcal{V}_3}(P)=\top\) since \(1\sim 4 \sim 2\) and \(1\sim 3 \sim 2\) are constrained to be colliders in order to not violate \(\mathcal{V}_3\), leaving the remaining v-configurations to be constrained as non-colliders. However, we see that \(\mathcal{V}_2(P,G)\neq \top\) when considering the non-collider \(4\fla 1\fra 3\). \erk
\end{example} 
%

\begin{example}[Neither \(\mathcal{V}_1\) nor \(\mathcal{V}_2\) implies SMR or \(\mathcal{V}_3\) and \(U_{\mathcal{V}_3}\)]\label{ev1v2smr}
    Consider a  distribution \(P\) which induces
    \begin{align*}
        J(P)=\{1\ci 2\cd 3,\quad 1\ci 2\},
    \end{align*} and the following graphs \(G_1\) and \(G_2\):
    \begin{align*}
        G_1:  1\fra 3\fra 2\qquad G_2:  1\fra 3 \fla 2
    \end{align*}
    Then we have that \(\mathcal{V}_1(P,G_1)=\top\) and \(\mathcal{V}_2(P,G_2)=\top\). But, \(P\) is not SMR 
    w.r.t.\
    \(G_1\) nor \(G_2\),
    since \(G_1\) and \(G_2\) are both sparsest Markov graphs of \(P\) and they are not Markov equivalent.
    
    We also have \(\mathcal{V}_3(P,G_1)=\top\) and \(\mathcal{V}_3(P,G_2)=\top\) for non-Markov equivalent \(G_1\) and \(G_2\), thus  \(U_{\mathcal{V}_3}(P)\neq \top\). Since \(\mathcal{V}_1(P,G_1)=\top\) and \(\mathcal{V}_2(P,G_1)\neq\top\), \(\mathcal{V}_1\) does not imply \(\mathcal{V}_2\). Likewise, \(\mathcal{V}_2(P,G_2)=\top\) and  \(\mathcal{V}_1(P,G_2)\neq\top\), \(\mathcal{V}_2\) does not imply \(\mathcal{V}_1\). \erk
\end{example}

\begin{proposition}[\(\mathcal{V}_3\) and \(U_{\mathcal{V}_3}\) imply SMR]\label{a3smr}
    For all  $P$ and $G$, we have  
$$\mathcal{V}_3(P,G)=\top \text{ and } U_{\mathcal{V}_3}(P)=\top \Rightarrow P \text{ satisfies the SMR assumption w.r.t.\ \(G\)}.$$
\end{proposition}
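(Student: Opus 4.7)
The plan is to show that every DAG $G'$ with $P$ Markovian to $G'$ satisfies $|E(G')| \geq |E(G)|$ and, when equality holds, is Markov equivalent to $G$. Combined with the finiteness of the class of DAGs on $V$ (so a sparsest Markov DAG always exists), this forces $G$ itself to be a sparsest Markov graph and forces all sparsest Markov graphs to be Markov equivalent to $G$, which is precisely SMR.

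First I would use $\sk(P) = \sk(G)$ from $\mathcal{V}_3(P,G) = \top$ together with the standard DAG fact that non-adjacent nodes admit a d-separating set (e.g., the ancestral set). For any Markovian DAG $G'$ and any non-adjacent $i,j$ in $G'$, Markovity converts d-separation into $i \ci j \cd C$ for some $C$, so $i$ and $j$ are non-adjacent in $\sk(P)$ as well; hence $\sk(P) \subseteq \sk(G')$ and $|E(G')| \geq |E(\sk(P))| = |E(G)|$. In the equality case $\sk(G') = \sk(G)$, and I would verify $\mathcal{V}_3(P, G') = \top$ by taking, for each v-configuration $i \sim k \sim j$, the ancestral separator $C = \an_{G'}(\{i,j\}) \setminus \{i,j\}$, which gives $i \ci j \cd C$ by Markovity: if $k$ is a non-collider in $G'$ then $k$ is a parent of $i$ or $j$ and so $k \in C$; if $k$ is a collider then $k$ is a common descendant and, by acyclicity, $k \notin \an_{G'}(\{i,j\})$, so $k \notin C$. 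Thus for each v-configuration the chosen $C$ satisfies $i\ci j\cd C$ with $k\in C$ iff the configuration is a non-collider, so $\mathcal{V}_3(P,G')=\top$. By $U_{\mathcal{V}_3}(P) = \top$ together with $\mathcal{V}_3(P, G) = \top$, this forces $G' \sim G$.

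To close the loop on the Markovity of $G$ itself, I would take any sparsest Markov DAG $G^*$, which exists by finiteness (the complete DAG is vacuously Markovian). By the previous step $G^* \sim G$, and since Markov equivalence preserves both the Markov property ($J(G) = J(G^*) \subseteq J(P)$) and the skeleton, $G$ is Markovian to $P$ and $|E(G)| = |E(G^*)|$, so $G$ is itself a sparsest Markov graph. The main subtlety is precisely this step: $\mathcal{V}_3$ only controls skeletons and v-configurations, so Markovity of $G$ is not directly asserted by the hypotheses and has to be recovered indirectly via $G^*$ and MEC-preservation. The d-separation case analysis on v-configurations, while essential, is routine.
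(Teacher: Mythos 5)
Your steps 1 and 2 are correct, and step 2 is essentially the same key computation the paper relies on: for a Markovian DAG $G'$ with $\sk(G')=\sk(P)$, the ancestral separator $\an_{G'}(\{i,j\})\setminus\{i,j\}$ witnesses $\mathcal{V}_3(P,G')=\top$; your write-up of this is actually more explicit than the paper's. The gap is in step 3. Your dichotomy says that every Markovian DAG $G'$ satisfies $|E(G')|\ge |E(G)|$ and that $G'\sim G$ \emph{when equality holds}; but when you take a sparsest Markov DAG $G^*$ and invoke ``the previous step'' to conclude $G^*\sim G$, you are silently assuming $|E(G^*)|=|E(G)|$, i.e.\ that some Markovian DAG has skeleton exactly $\sk(P)$. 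Step 1 only gives the inequality, and nothing in the hypotheses rules out strict inequality: $\mathcal{V}_3(P,G)=\top$ only constrains pairwise separations and v-configurations (the paper itself notes it is implied by the pairwise Markov property), and $U_{\mathcal{V}_3}(P)=\top$ says nothing about graphs whose skeleton differs from $\sk(P)$. If every Markovian DAG had strictly more than $|E(G)|$ edges, then $G$ would not be Markovian, hence not a sparsest Markov graph, and SMR would fail even though your steps 1--2 go through --- so the unaddressed case is precisely the one in which the conclusion would be false. For instance, with $J(P)=\{1\ci 3\cd 2,\ 2\ci 4\cd 3,\ 1\ci 4\cd 2\}$ and $G$ the chain $1\fra 2\fra 3\fra 4$, one checks $\sk(P)=\sk(G)$, both v-configurations are forced to be non-colliders, so $\mathcal{V}_3(P,G)=\top$ and $U_{\mathcal{V}_3}(P)=\top$, yet $1\ci 4\cd 3$ is absent and $P$ is not Markovian to $G$; whether such a $J(P)$ arises from an actual distribution is exactly the question your argument would need to settle.

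For comparison, the paper argues by contrapositive and, after disposing of $\sk(P)\neq\sk(G)$, asserts that failure of SMR yields two non-Markov-equivalent Markovian DAGs \emph{with the same skeleton}. That assertion is justified only in the sub-case where $G$ is already Markovian (then $G$ is automatically sparsest by your step 1, so only uniqueness of the sparsest graphs can fail, and those all have skeleton $\sk(P)$); the sub-case where $P$ is not Markovian to $G$ is glossed over there as well. So your route is genuinely different in organization (direct rather than contrapositive) but founders on the same missing ingredient: a derivation, from $\mathcal{V}_3(P,G)=\top$ and $U_{\mathcal{V}_3}(P)=\top$ together with whatever distributional assumptions are intended, that $P$ is Markovian to $G$. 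With that in hand, everything else you wrote closes cleanly.
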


\begin{example}[Neither \(\mathcal{V}_1,\mathcal{V}_2\) nor \(\mathcal{V}_3\) and \(U_{\mathcal{V}_3}\) is implied by SMR]
Consider the following example from \citet{UhlSP}.  Let   \(G\) be the graph given by following.  
\begin{center}
     \includegraphics[scale=0.7]{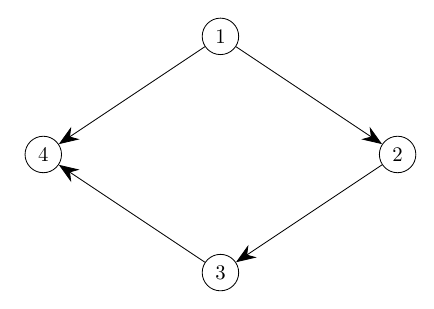}
     \end{center}
    Let \(P\) be a distribution which induces
     \begin{align*}
         J(P)=\{1\ci 3\cd 2,\quad 2\ci 4\cd \ns{1,3},\quad  1\ci 2\cd 4\},
     \end{align*}
     where \(P\) is SMR to \(G\) but adjacency faithfulness is violated for edge \(1\fra 2\).  \erk
\end{example}

\begin{example}[\(\mathcal{V}_3\) and \(U_{\mathcal{V}_3}\) are incomparable to \(\mathcal{V}\) and \(U_{\mathcal{V}}\)]\label{exvuv3}\hspace{2cm}
    \begin{itemize}
        \item
        (\(\mathcal{V}_3\) and \(U_{\mathcal{V}_3}\) do not imply \(\mathcal{V}\) and \(U_{\mathcal{V}}\)). Let \(G\) be the graph from Figure \ref{counterexg} and let the distribution \(P\) induce 
     \begin{align*}
         J(P)=\{3\ci 4,\quad 3\ci 4\cd \{1,2\},\quad 1\ci 2\}.
     \end{align*}Then we see that \(U_{\mathcal{V}_3}(P)=\top\) since \(4\sim1 \sim 3\) and \(4\sim 2 \sim 3\) cannot be colliders since the remaining v-configurations must be colliders in order to not violate \(\mathcal{V}_3\). However, V-OUS does not hold on the non-collider \(3\fla 1 \fra  4\).
        \item
        (\(\mathcal{V}\) and \(U_{\mathcal{V}}\) does not imply \(\mathcal{V}_3\) and \(U_{\mathcal{V}_3}\)). Let \(G\) be the graph from Figure \ref{counterexg} and let the distribution \(P\) induce 
     \begin{align*}
         J(P)=\{1\ci 2 \cd\{3,4\},\quad 3\ci 4\cd \{1,2\},\quad 1\ci 2\}.
     \end{align*}
     Then \(\mathcal{V}(P,G)=\top\) and \(U_\mathcal{V}(P)=\top\) since the 
v-configurations \(3\sim 2\sim 4\) and \(3\sim 1\sim 4\) cannot be colliders since the remaining v-configurations must be colliders in order to not violate the V-OUS property. However, \(U_{\mathcal{V}_3}(P)\neq\top\), since for non Markov equivalent graph \(G'\) with \(4\sim 1 \sim 3\) being a non-collider, \(\mathcal{V}_3(P,G')=\top\), thus \(U_{\mathcal{V}_3}(P)\neq \top\). \erk
    \end{itemize}
\end{example}

\begin{example}[Neither \(\mathcal{V}_1 \text{ nor } \mathcal{V}_2\) implies \(\mathcal{V}\) and \(U_{\mathcal{V}}\)]\label{exa1a2}
For \(\mathcal{V}_1\), let \(G\) be the graph from Figure \ref{counterexg} and let the distribution \(P\) induce 
     \begin{align}
     \label{1-2-3-4}
         J(P)=\{3\ci 4,\quad 3\ci 4\cd \{1,2\},\quad 1\ci 2\},
     \end{align}
      which we realize as an SCM in  \ref{sem2}.
      Then \(\mathcal{V}_1(P,G)=\top\), but \(\mathcal{V}(P,G)\neq \top\) due to V-OUS not holding on 
      the
      non-collider \(3\fla 1 \fra  4\).
     
     For \(\mathcal{V}_2\), let the distribution \(P\)  induce
     \begin{align*}
         J(P)=\{1\ci 2\cd 3,\quad  1\ci 2\}
     \end{align*}
    and let \(G\) be the graph given by  \(1\fra 3 \fla 2\). We  have that \(\mathcal{V}_2(P,G)=\top\) and \(\mathcal{V}(P,G)=\top\), but \(U_\mathcal{V}(P)\neq\top\), since for the non-Markov equivalent graph \(G'\) given by \(1\fra 3\fra 2\), we have \(\mathcal{V}(P,G')=\top\). \erk
\end{example}

\begin{example}[Neither \(\mathcal{V}_1 \text{ nor } \mathcal{V}_2\) is implied by \(\mathcal{V}\) and \(U_{\mathcal{V}}\)]\label{exbub}
For \(\mathcal{V}_1\), let \(G\) be the graph from Figure \ref{counterexg} and let the distribution \(P\) induce
         $$J(P)=\{1\ci 2 \cd \{3, 4\},\quad   3\ci 4\cd \{1,2\},\quad   1\ci 2\}.$$
Then \(\mathcal{V}(P,G)=\top\) and \(U_\mathcal{V}(P)=\top\) since the 
v-configurations \(3\sim 2\sim 4\) and \(3\sim 1\sim 4\) cannot be colliders since the remaining v-configurations must be colliders in order to not violate the V-OUS property. However, we see that \(\mathcal{V}_1(P,G)\neq \top\) when considering the collider \(1\fra 3\fla 2\).

For \(\mathcal{V}_2\), again let \(G\) be the graph from Figure \ref{counterexg} and let the distribution  \(P\) induce 
\begin{align*}
    J(P)=\{3\ci 4 \cd 1,\quad 3\ci 4\cd 2,\quad  3\ci 4\cd \{1,2\},\quad  1\ci 2\}.
\end{align*}
Then \(\mathcal{V}(P,G)=\top\) and \(U_\mathcal{V}(P)=\top\) by similar reasoning. However, we see that \(\mathcal{V}_2(P,G)\neq \top\) when considering 
the
non-collider \(3\fla 1\fra 4\).\erk
\end{example}

\begin{remark}
    It is not difficult to explicitly construct these counterexamples as structural equation models 
\citep{ zbMATH05645279}.  \erk
\end{remark}

\section{Applications to minimality}\label{degen}

\subsection{Comparing Correctness Conditions from Minimality Constraints}\label{mincompare}

Recall the design paradigm suggested by Theorem \ref{mainth}, in which the designer first selects a property \(\mathcal{A}\) and studies how strong the correctness condition of the corresponding algorithm (\(\mathcal{A}\) and \(U_{\mathcal{A}}\)) is, before finally designing the actual steps of the corresponding algorithm. Section \ref{comp} considers the correctness conditions of algorithms corresponding to all possible properties; here we will focus on properties \(\mathcal{A}\) that are variants of the minimality assumptions in Section \ref{assume} and compare the correctness condition of the corresponding algorithm.

First, we restrict our graph class \(\mathbb{G}\) to be DAGs, Proposition \ref{propforstlit}, which compares the properties by \emph{themselves}, can then be combined with Propositions \ref{reverse} and \ref{preserve} to obtain the following result for DAGs:

\begin{proposition}[SMR is the weakest amongst minimality, for DAGs]
\label{SMR good dag}
    Let the  graph class \(\mathbb{G}\) be DAGs. For a probability distribution \(P\) and a DAG \(G\) on the same set of nodes $V$, consider the following properties:
    \begin{itemize}
        \item  \(\mathcal{M}_1(P,G)=\top\) if \(P\) is minimally Markovian w.r.t. \(G\).
        \item  \(\mathcal{M}_2(P,G)=\top\) if \(G\) is a sparsest Markov graph of \(P\).
        \item  \(\mathcal{M}_3(P,G)=\top\) if \(P\) is Pearl-minimal w.r.t.\ \(G\).
        \item  \(\mathcal{M}_4(P,G)=\top\) if \(P\) is causally minimal w.r.t.\ \(G\).
    \end{itemize}
    Then  \(\supp(\mathcal{M}_1)\subseteq \supp(\mathcal{M}_2)\) and \(\supp(\mathcal{M}_4)\subseteq \supp(\mathcal{M}_3)\subseteq \supp(\mathcal{M}_2)\).
\end{proposition}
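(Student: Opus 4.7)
The plan is to combine the property-level chain of implications $\mathcal{M}_1 \Rightarrow \mathcal{M}_2 \Rightarrow \mathcal{M}_3 \Rightarrow \mathcal{M}_4$ from Proposition \ref{propforstlit} with the two transfer lemmas, Propositions \ref{reverse} and \ref{preserve}, to lift each step to an inclusion of supports. Which lemma applies to which step depends on whether the existence of a graph satisfying the stronger property is guaranteed for every distribution.

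For $\supp(\mathcal{M}_1)\subseteq\supp(\mathcal{M}_2)$, I would use Proposition \ref{preserve}. The implication $\mathcal{M}_1\Rightarrow \mathcal{M}_2$ is given by Proposition \ref{propforstlit}, so the real task is to verify that, whenever $P$ admits some minimally Markovian DAG $G_P$, the sets $\{G:\mathcal{M}_1(P,G)=\top\}$ and $\{G:\mathcal{M}_2(P,G)=\top\}$ coincide. This reduces to an edge-counting argument: $G_P$ minimally Markovian forces $|E(G_P)|=|\sk(P)|$, while any DAG $G$ to which $P$ is Markovian must contain every edge of $\sk(P)$, since non-adjacency in a DAG yields a d-separation (for instance by the parents of one endpoint) and hence a conditional independence incompatible with $\sk(P)$. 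Consequently any sparsest Markov DAG of $P$ must have exactly $|\sk(P)|$ edges, skeleton equal to $\sk(P)$, and is therefore minimally Markovian. Proposition \ref{reverse} cannot be used here because a non-adjacency-faithful $P$ may admit sparsest Markov DAGs but no minimally Markovian DAG, breaking the existence equivalence that Proposition \ref{reverse} requires.

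For $\supp(\mathcal{M}_3)\subseteq\supp(\mathcal{M}_2)$ and $\supp(\mathcal{M}_4)\subseteq\supp(\mathcal{M}_3)$, I would instead use Proposition \ref{reverse}, taking $(\mathcal{A},\mathcal{B})=(\mathcal{M}_2,\mathcal{M}_3)$ and $(\mathcal{M}_3,\mathcal{M}_4)$ in turn. The property-level implications are again supplied by Proposition \ref{propforstlit}. The existence equivalence demanded by Proposition \ref{reverse} is automatic: since the class of DAGs on a fixed finite node set is finite, every $P$ admits at least one sparsest Markov DAG (minimising $|E(G)|$ over Markovian DAGs), at least one Pearl-minimal DAG (a maximal element of $\{G : J(G)\subseteq J(P)\}$), and at least one causally minimal DAG. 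It remains to check the class-property hypothesis: $\mathcal{M}_2$ is a class property because Markov equivalent DAGs share a skeleton and thus an edge count, and $\mathcal{M}_3$ is a class property because its definition depends only on $J(G)$ and $J(P)$, both preserved by Markov equivalence.

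The main obstacle is the first inclusion: the DAG-specific fact that any Markovian DAG must contain $\sk(P)$ is what unlocks Proposition \ref{preserve}, and it has no immediate analogue for MAGs, which is presumably why Proposition \ref{propforst} is needed separately in that setting. The remaining two inclusions are essentially formal consequences of finiteness and the class-property observations.
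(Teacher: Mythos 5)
Your proposal is correct and follows essentially the same route as the paper: Proposition \ref{preserve} for \(\supp(\mathcal{M}_1)\subseteq\supp(\mathcal{M}_2)\) (via the skeleton-containment/edge-counting argument showing the minimally Markovian and sparsest Markov graph sets coincide, which is the paper's appeal to \eqref{subgraph}), and Proposition \ref{reverse} with the class-property and guaranteed-existence observations for the chain \(\supp(\mathcal{M}_4)\subseteq\supp(\mathcal{M}_3)\subseteq\supp(\mathcal{M}_2)\). Your additional remarks on why \ref{reverse} fails for the first step and why existence is automatic for the others are accurate but not needed beyond what the paper records.
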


\begin{remark}
    
    In the case of DAGs, although the corresponding property \(\mathcal{V}\) for Me-LoNS is stronger than \(\mathcal{M}_1\) (additionally requiring the V-OUS and collider-stable condition), neither \(\supp(\mathcal{V})\) nor \(\supp(\mathcal{M}_2)\) contain each other \cite[Example 2]{teh}; this does not contradict \(\supp(\mathcal{M}_1)\subseteq \supp(\mathcal{M}_2)\) in Proposition \ref{SMR good} since implication is in general not preserved after taking the conjunction with the corresponding uniqueness.
    
    Previously, \citet{lam} proved the containment \(\supp(\mathcal{M}_3)\subseteq \supp(\mathcal{M}_2)\); our proof of this containment is different, 
    by appealing to the conditions in Propositions \ref{reverse} and  \ref{preserve} 
in Section \ref{theory}. 
\erk
\end{remark}


To obtain analogous results for MAGs, we first obtain an analogous result to Proposition \ref{propforstlit}, comparing the notions of minimality, but for graph class \(\mathbb{G}\) being maximal ancestral graphs (MAGs).

\begin{proposition}[Proposition \ref{propforstlit} for MAGs]\label{propforst}
    For a given distribution \(P\) and a maximal ancestral graph \(G\), the following statements imply the next:
    \begin{enumerate}
        \item \(P\) is minimally Markovian w.r.t. \(G\).
        \item \(G\) is a sparsest Markov graph of \(P\).
        \item \(P\) is Pearl-minimal w.r.t. \(G\).
        \item \(P\) is causally minimal w.r.t.\ \(G\).
    \end{enumerate}
\end{proposition}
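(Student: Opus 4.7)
The plan is to prove the three implications $1 \Rightarrow 2$, $2 \Rightarrow 3$, and $3 \Rightarrow 4$ in turn for MAGs. Two facts drive all three steps uniformly: every MAG $H$ is maximal, so non-adjacent $i, j$ admit some $C$ with $i \perp_H j \cd C$; and m-separation is monotone under edge deletion, so if $H' \subseteq H$ as edge sets then $J(H) \subseteq J(H')$. Throughout I will use the skeleton-counting identity $|E(H)| = |E(\sk(H))|$, which holds for MAGs since each edge contributes exactly one adjacency.

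For $1 \Rightarrow 2$, let $G' \in \mathbb{G}$ be any MAG with $P$ Markovian to $G'$. For each pair $i, j$ non-adjacent in $G'$, maximality of $G'$ yields some $C$ with $i \perp_{G'} j \cd C$, and the Markov property delivers $i \ci j \cd C$, so $i$ and $j$ are non-adjacent in $\sk(P) = \sk(G)$. Contrapositively, every edge of $G$ is an edge of $G'$, giving $\sk(G) \subseteq \sk(G')$ and hence $|E(G)| \leq |E(G')|$. Thus $G$ is a sparsest Markov MAG of $P$.

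For $2 \Rightarrow 3$, suppose for contradiction that there exists $G' \in \mathbb{G}$ with $J(G) \subsetneq J(G') \subseteq J(P)$. Applying maximality to $G$: any non-adjacent pair in $G$ is $m$-separated by some $C$, hence also in $G'$ by $J(G) \subseteq J(G')$, so non-adjacent in $G'$. This gives $\sk(G') \subseteq \sk(G)$ and $|E(G')| \leq |E(G)|$. Since $P$ is Markovian to $G'$ and $G$ is sparsest, $|E(G)| \leq |E(G')|$, so $\sk(G) = \sk(G')$. The remaining task, and the principal obstacle of the whole proposition, is to rule out strict inclusion $J(G) \subsetneq J(G')$ when the skeletons coincide. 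I plan to invoke the Markov equivalence characterization for MAGs \citep{10.1214/aos/1031689015}: two MAGs with the same skeleton are Markov equivalent iff they share the appropriate collider-with-order structure (including compatibility on discriminating paths), and distinct such structures on the same skeleton produce mutually incomparable $J$'s, since swapping a non-collider to a collider (or altering a discriminating-path orientation) simultaneously creates a separation absent on the other side and destroys one present on the other side. Consequently $\sk(G) = \sk(G')$ together with $J(G) \subseteq J(G')$ forces $J(G) = J(G')$, contradicting the strict inclusion.

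For $3 \Rightarrow 4$, suppose for contradiction that $P$ is Markovian to a proper sub-MAG $G' \subsetneq G$ with $G' \in \mathbb{G}$. Monotonicity of $m$-separation gives $J(G) \subseteq J(G')$. Fix an edge between some $i$ and $j$ that is present in $G$ but absent in $G'$; maximality of $G'$ supplies $C$ with $i \perp_{G'} j \cd C$, and this triple cannot lie in $J(G)$ since $i$ and $j$ are adjacent in $G$ and therefore never $m$-separated there. Thus $J(G) \subsetneq J(G') \subseteq J(P)$, contradicting Pearl-minimality of $P$ to $G$. Everything outside the equality-of-skeleton step in $2 \Rightarrow 3$ reduces to direct edge counting combined with the two unifying observations above.
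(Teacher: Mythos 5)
Your decomposition into the three implications matches the paper's, and your arguments for $1 \Rightarrow 2$ and $3 \Rightarrow 4$ are essentially identical to the paper's: both rest on maximality of the graphs involved, the Markov property, and monotonicity of graphical separation under edge deletion. The reduction in $2 \Rightarrow 3$ to the case $\sk(G) = \sk(G')$ also agrees with the paper.

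The gap is exactly where you flag ``the principal obstacle'': the claim that two non-Markov-equivalent MAGs with a common skeleton have incomparable separation sets, so that $J(G) \subseteq J(G')$ forces equality. You assert this as a consequence of the Markov equivalence characterisation, but that characterisation only tells you \emph{when} two MAGs are equivalent; it does not by itself produce, for a pair of inequivalent same-skeleton MAGs, a separation valid in one but not the other in \emph{both} directions. Your one-sentence justification (swapping a non-collider to a collider creates one separation and destroys another) is the right intuition for unshielded triples in DAGs, but for MAGs the equivalence class also depends on colliders discriminated by longer paths, and there the incomparability is not immediate. The paper closes this with a genuine argument: by Proposition \ref{mecag} there is a minimal collider path present in one graph but not the other; taking a shortest such path and arguing it may be assumed to contain a single non-collider node, the configuration reduces to a discriminating path $\langle i, i_1, \ldots, i_n, j\rangle$ whose terminal collider status differs between $G$ and $G'$; then \citet[Lemma 3.9]{dispath} supplies a separating set for $i,j$ excluding $i_n$ in one graph and one including $i_n$ in the other, contradicting $J(G) \subseteq J(G')$. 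Without this reduction and lemma (or an explicit citation of an incomparability result of this form), your $2 \Rightarrow 3$ step is incomplete.
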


Analogously substituting in variants of the minimality assumptions introduced in Section \ref{assume} as the property in our framework, Proposition \ref{propforst} is 
combined with Propositions \ref{reverse} and \ref{preserve} to obtain the following:


\begin{proposition}[SMR is the weakest amongst minimality, for MAGs]
%
\label{SMR good}
    Let the graph class \(\mathbb{G}\) be MAGs. 
 For a probability distribution \(P\) and a maximal ancestral graph \(G\) on the same set of nodes $V$, consider the following properties:
    \begin{itemize}
        \item  \(\mathcal{M}_1(P,G)=\top\) if \(P\) is minimally Markovian w.r.t. \(G\).
        \item  \(\mathcal{M}_2(P,G)=\top\) if \(G\) is a sparsest Markov graph of \(P\).
        \item  \(\mathcal{M}_3(P,G)=\top\) if \(P\) is Pearl-minimal w.r.t.\ \(G\).
        \item  \(\mathcal{M}_4(P,G)=\top\) if \(P\) is causally minimal w.r.t.\ \(G\).
    \end{itemize}
    Then  \(\supp(\mathcal{M}_1)\subseteq \supp(\mathcal{M}_2)\) and \(\supp(\mathcal{M}_4)\subseteq \supp(\mathcal{M}_3)\subseteq \supp(\mathcal{M}_2)\).
\end{proposition}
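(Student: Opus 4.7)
The plan is to mirror the strategy underlying the DAG case (Proposition~\ref{SMR good dag}) by combining the ``by themselves'' chain of implications in Proposition~\ref{propforst} with the transfer results in Propositions~\ref{reverse} and~\ref{preserve}. In each of the three containments, one of these two transfer propositions is invoked, chosen according to the direction of the by-themselves implication relative to the desired support containment.

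For the two right-hand containments $\supp(\mathcal{M}_4)\subseteq \supp(\mathcal{M}_3)\subseteq \supp(\mathcal{M}_2)$, I would apply Proposition~\ref{reverse} twice. The by-themselves implications $\mathcal{M}_2\Rightarrow \mathcal{M}_3\Rightarrow \mathcal{M}_4$ are given by Proposition~\ref{propforst}. I would then verify the two side conditions of Proposition~\ref{reverse}: first, that $\mathcal{M}_2$ and $\mathcal{M}_3$ are class properties, since sparsest Markovness depends only on $J(G)$ and $|E(G)|$ (Markov equivalent MAGs share the same skeleton and hence the same number of edges), while Pearl-minimality depends only on $J(G)$; second, that for any distribution $P$ on $V$ there always exists a MAG satisfying each of $\mathcal{M}_2, \mathcal{M}_3, \mathcal{M}_4$. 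The latter holds because any complete DAG on $V$ is a MAG with $J(G)=\emptyset$ and is therefore Markovian to every $P$, and within the finite class of MAGs one can always extract a sparsest Markov, a Pearl-minimal, and a causally minimal MAG from the nonempty family of MAGs Markovian to $P$. Thus the existence equivalences are vacuously satisfied and Proposition~\ref{reverse} yields $\supp(\mathcal{M}_4)\subseteq\supp(\mathcal{M}_3)$ and $\supp(\mathcal{M}_3)\subseteq\supp(\mathcal{M}_2)$.

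For the left-hand containment $\supp(\mathcal{M}_1)\subseteq \supp(\mathcal{M}_2)$, I would instead apply Proposition~\ref{preserve}. The by-themselves implication $\mathcal{M}_1\Rightarrow\mathcal{M}_2$ again comes from Proposition~\ref{propforst}. The key lemma to establish is the set equality
\[
 \{G\in \mathbb{G}: \mathcal{M}_1(P,G)=\top\}=\{G\in \mathbb{G}: \mathcal{M}_2(P,G)=\top\}
\]
whenever $\mathcal{M}_1(P,G_P)=\top$ for some MAG $G_P$. The inclusion $\subseteq$ is immediate from Proposition~\ref{propforst}. For $\supseteq$, I would use maximality: for any MAG $G$ that is Markovian to $P$, each pair of non-adjacent nodes in $G$ is m-separated by some set, producing a conditional independence in $P$, so $\sk(P)\subseteq \sk(G)$. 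Because $\mathcal{M}_1(P,G_P)=\top$ forces $\sk(P)=\sk(G_P)$, we get $|E(G)|\ge |E(G_P)|$ for all Markovian $G$. Any sparsest Markov MAG $G$ then satisfies $|E(G)|=|E(G_P)|$ and, combined with $\sk(P)\subseteq \sk(G)$, yields $\sk(G)=\sk(P)$, so $G$ is adjacency faithful and hence minimally Markovian to $P$. This gives the desired set equality, and Proposition~\ref{preserve} delivers the containment.

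The main obstacle is the maximality-based skeleton comparison used in the last paragraph: it is what justifies why ``sparsest Markov'' and ``minimally Markovian'' coincide once the latter is attainable, and it is the only step that genuinely uses that we are working in the class of MAGs rather than in a generic subclass of anterial graphs. Everything else is a careful bookkeeping exercise in plugging Proposition~\ref{propforst} into Propositions~\ref{reverse} and~\ref{preserve}.
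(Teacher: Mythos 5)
Your proposal is correct and follows essentially the same route as the paper: the paper's proof is exactly a transfer of the DAG argument, using Proposition \ref{propforst} in place of Proposition \ref{propforstlit}, with the maximality-based skeleton comparison (the analogue of \eqref{subgraph}) feeding Proposition \ref{preserve} for \(\supp(\mathcal{M}_1)\subseteq\supp(\mathcal{M}_2)\), and a minimal-edge Markovian MAG witnessing all of \(\mathcal{M}_2,\mathcal{M}_3,\mathcal{M}_4\) simultaneously so that Proposition \ref{reverse} yields the remaining containments. Your additional explicit checks (that Markov equivalent MAGs share skeletons and hence edge counts, and that a Markovian MAG always exists) are details the paper leaves implicit but are correct.
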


\begin{remark}\label{classremark}
Since 
the
notions 
of minimality in Section \ref{assume},  
except for the minimal Markov property,
depend on the graph class \(\mathbb{G}\), Proposition \ref{propforstlit} is not logically implied by Proposition \ref{propforst}, however the proof follows analogously. Likewise, since the corresponding uniqueness depends on the graph class, 
Proposition \ref{SMR good dag} is not logically implied by \ref{SMR good}. \erk
\end{remark}

Propositions \ref{SMR good dag} and \ref{SMR good} imply that if one was to build an algorithm which outputs the set of all graphs that satisfy one of the minimality assumptions from Section \ref{assume} w.r.t. the input distribution \(P\), the algorithm corresponding to the sparsest Markov property, which happens to be the SP algorithm, is optimal amongst existing notions of minimality, in the sense that algorithms corresponding to other minimality notions would require stronger conditions to be correct.

From the following 
Proposition \ref{uniqpfaith}, 
the containment supp\((\mathcal{M}_4)\subseteq\) supp\((\mathcal{M}_3)\) in Propositions \ref{SMR good dag} and \ref{SMR good}, implies that for both DAGs and MAGs, \(\mathcal{M}_4\) and its corresponding uniqueness is stronger than or equivalent to faithfulness, however the following example shows that that this containment is strict.

\begin{example}[Causal minimality and its corresponding uniqueness is strictly stronger than faithfulness]
\label{cmfaith}
    Consider the following graphs:
    \begin{center}
        \includegraphics[scale=0.7]{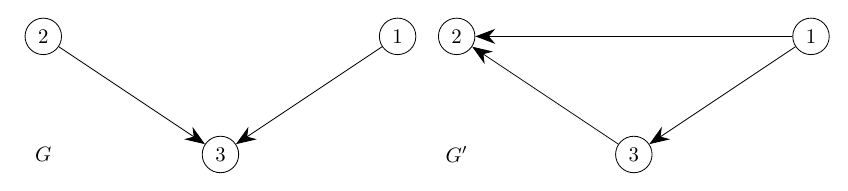}
    \end{center}
    Let a distribution \(P\) be faithful to the graph  \(G\).   Then \(P\) is causal minimal w.r.t.\ \(G\).  However, it it easy to verify that $P$ is also causal minimal w.r.t.\ the  non-Markov equivalent complete graph \(G'\). 
 Thus \(P\) is not causally minimal-unique.  \erk
\end{example}

Markov equivalence characterisation results for MAGs from \citet{mecag} are used in proving Proposition \ref{propforst}, and applications of Proposition \ref{reverse} and \ref{preserve} are analogous for both DAGs and MAGs. As such, we conjecture that a version of Proposition \ref{propforst} for more general graph classes \(\mathbb{G}\) with known Markov equivalence characterisation such as maximal anterial graphs holds similarly, and applications of Propositions \ref{reverse} and \ref{preserve} follow analogously to obtain versions of Propositions \ref{SMR good dag} and \ref{SMR good} for larger graph classes \(\mathbb{G}\).



\subsection{Necessity of Pearl-Minimality}
\label{pminimp}
Given any graph class \(\mathbb{G}\), recall that Theorem \ref{mainth} allows for any generic property \(\mathcal{A}\). However, Example \ref{A.ex} suggests that, for meaningful causal learning, the property \(\mathcal{A}\) selected at the start of the design paradigm suggested by Theorem \ref{mainth} should not be arbitrary.

\begin{example}[Degenerate \(\mathcal{A}\)]\label{A.ex}
    For each distribution $P$ fix an (arbitrarily) assigned  graph $G_P$; here, we can further specify that $P$ is not Markovian to $G_P$. Then for all $P$ and $G\in \mathbb{G}$, let $$\mathcal{A}(P, G)=\top \iff G \text { is Markov equivalent to } G_P.$$
    In this case there does not exist a property \(\mathcal{B}\) such that \(\supp(\mathcal{A})\subset \supp(\mathcal{B})\), thus the corresponding algorithm of the property \(\mathcal{A}\) is optimal in the sense that there is no other property that results in a corresponding algorithm with a strictly weaker correctness condition. However, an algorithm corresponding to the defined \(\mathcal{A}\), which returns the set of graphs \(\boldsymbol{G}(P)\) from input distribution \(P\) such that \(\mathcal{A}(P,\boldsymbol{G}(P))=\top\), simply returns the arbitrarily assigned MEC when defining \(\mathcal{A}\).\erk
\end{example}

The following examples suggest that any meaningful property in the context of causal learning has to at least satisfy the Pearl minimality property; i.e.\ for any meaningful property \(\mathcal{A}\), if \(\mathcal{A}(P,G)=\top\), then \(P\) is Pearl-minimal to \(G\).

In causal learning, the goal is to represent the conditional independence statements of the observational distribution \(P\) using the graphs in \(\boldsymbol{G}(P)\subseteq \mathbb{G}\). If \(P\) is not Markovian to all graphs in \(\boldsymbol{G}(P)\), then we have separations in some graphs of \(\boldsymbol{G}(P)\) implying additional conditional independencies that are not induced by \(P\), potentially resulting in wrong causal statements being made, thus showing that a meaningful property \(\mathcal{A}\) for causal learning has to at least satisfy the Markov assumption to avoid degeneracies, that is, if \(\mathcal{A}(P,G)=\top\), then \(P\) is Markovian to \(G\).

In the following example, when \(\mathcal{A}\) is just the Markov property, the correctness condition of the corresponding algorithm can be too strong.
\begin{example}
[Just the Markov property]

    Given any class of graphs \(\mathbb{G}\), let \(\mathcal{A}(P,G)=\top\) if \(P\) is Markovian to \(G\). Then \(\mathcal{A}(P,G)=\top\) and \(U_{\mathcal{A}}(P)=\top\) is equivalent to \(G\) being a complete graph and \(P\) not Markovian to any subgraph of \(G\).\erk
\end{example} 

Even if \(\mathcal{A}\) satisfies  the Markov property, the following degeneracy can still occur with the complete graph.

\begin{example}[Degenerate \(\mathcal{A}\) with the  Markov property]
    For all \(P\) and \(G\in \mathbb{G}\), let
    \begin{align*}
        \mathcal{A}(P, G)=\top \iff  \text{sk$(G)$ is complete}.
    \end{align*}
    Then there does not exist \(\mathcal{B}\) such that \(\supp(\mathcal{A})\subset \supp(\mathcal{B})\), thus the corresponding algorithm of property \(\mathcal{A}\) is again optimal in the sense that there is no other property that results in a corresponding algorithm with a strictly weaker correctness condition. However, the corresponding algorithm from the defined \(\mathcal{A}\), which returns the set of graphs \(\boldsymbol{G}(P)\) from input distribution \(P\) such that \(\mathcal{A}(P,\boldsymbol{G}(P))=\top\), simply outputs the set of graphs with complete skeletons for any input distribution. \erk
\end{example}

Although \(P\) is Markovian to graphs
\(G\in \boldsymbol{G}(P)\) with a complete skeletons,  \(G\) does not imply any conditional independencies on \(P\); thus, no causal statements that are implications of conditional independencies on \(P\) can be made from \(G\), thus \(G\) is uninformative.
The output set of graphs \(\boldsymbol{G}(P)\) should aim to be as informative as possible, with all graphs in \(\boldsymbol{G}(P)\) having as many conditional independencies of \(P\) whilst satisfying the Markov property. Hence a meaningful property \(\mathcal{A}\) for causal learning should at least satisfy the Pearl-minimality constraint, that is,  if \(\mathcal{A}(P,G)=\top\),  
then \(P\) is Pearl-minimal to \(G\). This can also be expressed as: \(\mathcal{A}\) is at least as strong as Pearl-minimality.

\begin{remark}
    From Propositions \ref{propforstlit} and \ref{propforst}, it can be seen that causal minimality is weaker than faithfulness, which may suggest that a meaningful property \(\mathcal{A}\) for causal learning should at least satisfy causal minimality; i.e. if \(\mathcal{A}(P,G)=\top\), then \(P\) is causally minimal to \(G\). However, from Propositions \ref{SMR good dag} and \ref{SMR good} and Example \ref{cmfaith}, when \(A\) is just causal minimality, after taking conjunction with the corresponding uniqueness, the correctness condition of the algorithm corresponding to property \(\mathcal{A}\) is strictly stronger than faithfulness. Since we are interested in causal learning under correctness conditions weaker than faithfulness, we will not be considering causal minimality.
    \erk
\end{remark}

Similar arguments on minimality have also been considered by \citet{forster}.

\subsection{Necessity of Strengthening Pearl-Minimality to Relax Faithfulness}
In Section \ref{pminimp}, we deduced that any meaningful property \(\mathcal{A}\) for causal learning should be at least as strong as Pearl-minimality, which will be denoted as \(\mathcal{M}\). However, letting the property be just \(\mathcal{M}\), since \(\mathcal{M}\) is a class property, from Corollary \ref{maincor} we have that the \emph{exact} correctness condition of the corresponding algorithm is \(\mathcal{M}(P,G_0)=\top\) and \(U_{\mathcal{M}}(P)=\top\), which is faithfulness, as shown in Proposition \ref{uniqpfaith}. 

We say a distribution \(P\) is \emph{graphical} if there exists a graph \(G\) such that \(P\) is faithful to \(G\). 
\begin{proposition}[Pearl-minimal-unique and graphical distributions are equivalent]
\label{uniqpfaith}
    Let \(\mathbb{G}\) be a subclass of anterial graphs that contains DAGs and $\mathcal{M}$ be Pearl-minimality.  For a distribution \(P\),  it holds that \(U_{\mathcal{M}}(P)=\top\) if and only if \(P\) is graphical. 
\end{proposition}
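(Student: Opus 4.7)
The plan is to prove both implications separately, with the reverse direction being the substantive one. For the forward direction, suppose \(P\) is faithful to some \(G\in\mathbb{G}\), so \(J(G)=J(P)\). Then \(G\) itself is Pearl-minimal because there cannot exist \(G'\in\mathbb{G}\) with \(J(G)\subsetneq J(G')\subseteq J(P)=J(G)\). Moreover, any Pearl-minimal \(G''\in\mathbb{G}\) satisfies \(J(G'')\subseteq J(P)=J(G)\); if this containment were strict, \(G\) itself would witness the failure of Pearl-minimality of \(G''\). Hence \(J(G'')=J(G)\), so \(G''\) is Markov equivalent to \(G\), and therefore \(U_{\mathcal{M}}(P)=\top\).

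For the reverse direction, I would first establish that at least one Pearl-minimal graph exists in \(\mathbb{G}\): since \(\mathbb{G}\) contains DAGs, the complete DAG has \(J(G)=\emptyset\subseteq J(P)\) and is Markovian to \(P\), and because the set of possible \(J\)-values is finite, iteratively replacing by Markovian graphs with strictly more separations must terminate at some Pearl-minimal \(G^*\). Under \(U_{\mathcal{M}}(P)=\top\), every Pearl-minimal graph lies in the MEC of \(G^*\) and hence shares a common value \(J^*=J(G^*)\). The goal then reduces to showing \(J^*=J(P)\). Suppose toward contradiction that \(J^*\subsetneq J(P)\) and fix a triple \((A,B,C)\in J(P)\setminus J^*\). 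I would construct a Markovian DAG \(G'\in\mathbb{G}\) with \((A,B,C)\in J(G')\) by ordering the nodes as \(C\) first, then \(A\), then \(B\), then the remaining nodes \(R\), and placing a directed edge from every earlier node to every later node except from \(A\)-nodes to \(B\)-nodes. Applying the same ascending procedure to \(G'\) yields a Pearl-minimal \(G''\) whose \(J\)-value contains \(J(G')\), hence contains \((A,B,C)\), so \(J(G'')\neq J^*\), contradicting uniqueness.

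What remains is to verify the two required properties of \(G'\). For Markovianity, checking the local Markov property at each node reduces the non-trivial cases to \(a\ci B\cd C\cup (A\cap\{v:v<a\})\) for each \(a\in A\) and \(b\ci A\cd C\cup (B\cap\{v:v<b\})\) for each \(b\in B\); both follow from \(A\ci B\cd C\) via decomposition followed by weak union (the semi-graphoid axioms hold for any probability distribution). The local conditions at the \(C\)- and \(R\)-nodes are trivial because their non-descendants coincide with their parents. For the separation \(A\perp_{G'} B\cd C\), I would use the moralization criterion: no \(R\)-node is an ancestor of \(A\cup B\cup C\) in \(G'\), so \(\an(A\cup B\cup C)=A\cup B\cup C\), and the moral graph on this ancestral set contains no \(A\)-to-\(B\) edge (none were present originally, and no collider introduces one since parents of every \(A\)-node lie in \(C\cup A\) and parents of every \(B\)-node lie in \(C\cup B\)), so removing \(C\) disconnects \(A\) from \(B\). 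The main obstacle is this construction and the moralization check; in particular, placing \(R\) last is essential so that \(R\)-nodes cannot serve as collider anchors that would otherwise spoil the desired separation.
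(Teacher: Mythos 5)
Your proof is correct, and the overall strategy coincides with the paper's: assume $U_{\mathcal{M}}(P)=\top$, take a Pearl-minimal $G^*$ (which is Markovian, so $J(G^*)\subseteq J(P)$), pick a witness triple in $J(P)\setminus J(G^*)$ if faithfulness fails, realize that triple as a separation in some DAG Markovian to $P$, ascend to a Pearl-minimal graph above it, and contradict uniqueness. The one substantive difference is in the realization step: the paper first reduces the witness $A\ci B\cd C$ to a singleton statement $i\ci j\cd C$ via decomposition and then invokes Lemma B.14 of \citet{lam} for the existence of a DAG whose separations consist of exactly that statement, whereas you construct the realizing DAG explicitly for the general triple $(A,B,C)$ (ordering $C,A,B,R$ with all forward edges except $A$-to-$B$) and verify both the Markov property (local Markov plus decomposition and weak union, valid for every distribution) and the separation $A\perp_{G'}B\cd C$ (moralization, with $R$ placed last so that $\an(A\cup B\cup C)=A\cup B\cup C$). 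This makes your argument self-contained where the paper delegates to a cited lemma, at the cost of a longer verification; your explicit treatment of the forward (trivial) direction and of the existence of a Pearl-minimal graph via the finite ascending chain is also more careful than the paper's one-line dismissal.
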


\begin{remark}
    \citet{lam} proved that the set of distributions \(P\) such that \(U_\mathcal{M}(P)= \top\) is exactly the set of graphical distributions in the case of DAGs, and Proposition \ref{uniqpfaith} is a simple extension of this fact to the case of any class of graphs containing DAGs, including anterial graphs. \erk
\end{remark}

Thus, a property \(\mathcal{A}\) that is \emph{strictly stronger} than Pearl-minimality is \emph{necessary} to design meaningful constraint-based causal learning algorithms that are correct under \emph{strictly weaker} conditions than faithfulness. Table \ref{tab:example} shows examples of properties \(\mathcal{A}\) that are strictly stronger than Pearl-minimality which result in a correctness condition \(\mathcal{A}\) and \(U_{\mathcal{A}}\) of the  corresponding algorithm which is weaker than faithfulness.

\begin{table}[h!]
\normalsize
  \centering
  \begin{tabular}{|c |c|}
    \hline
    \textbf{Property \(\mathcal{A}\) strictly stronger than Pearl-minimality} & \textbf{Correctness condition \(\mathcal{A}\) and \(U_{\mathcal{A}}\) weaker than faithfulness}\\
    \hline
    Sparsest Markov property
    & SMR \\
    \hline
    Adjacency faithfulness, V-OUS and collider-stability & 
    Adjacency faithfulness, V-OUS, collider-stability and modified V-stability\\
    \hline
     Adjacency faithfulness and orientation faithfulness &
     Adjacency faithfulness and orientation faithfulness\\
    \hline
  \end{tabular}
  \caption{Examples of properties that are strictly stronger than Pearl-minimality, with correctness conditions of the corresponding algorithms that are weaker than faithfulness.}
  \label{tab:example}
\end{table}

Intuitively, the strengthening of Pearl-minimality \(\mathcal{M}\) into property \(\mathcal{A}\) excludes some Pearl-minimal graphs. This makes \(U_{\mathcal{A}}(P)=\top\) hold for more distributions \(P\) than \(U_{\mathcal{M}}(P)=\top\), which may result in a weaker overall correctness condition after taking the conjunction with \(\mathcal{A}\). A natural question would then be: ``Do there exist properties \(\mathcal{A}\) strictly stronger than \(\mathcal{M}\) such that \(U_{\mathcal{A}}(P)=\top\) holds for all distributions \(P\)?''. This would then imply that the algorithm corresponding to \(\mathcal{A}\) is optimal in the sense that there is no other property that results in a corresponding algorithm with a strictly weaker correctness condition. The following examples show that if we restrict the property \(\mathcal{A}\) to be any of the stronger notions of minimality in Section \ref{assume}, 
then there exists some distribution \(P\) such that \(U_{\mathcal{A}}(P)\neq \top\). 

\begin{example}[\(U_{\mathcal{A}}(P)\neq \top\) for some \(P\), for DAGs]
\label{degenex}
    Given \(n\) nodes, consider graphs given by
    \begin{align*}
        G_1:  1\fra 2\fra 
        \cdots
        \fra n-1\fra n \quad \text{ and} \quad G_2:  1\fra 2\fra 
        \cdots
        \fra (n-2)\fra  
        (n-1)
        \fla 
        n.
    \end{align*} 
    Let \(J(P)\) be the set of conditional independencies implied be the union of  Markov assumptions from both \(G_1\) and \(G_2\); 
   \(P\) is Pearl-minimal to  both \(G_1\) and \(G_2\). 

     Note that the distribution \(P\) is minimally Markovian w.r.t.\
     \(G_1\) and \(G_2\) and thus, by Proposition \ref{propforstlit}, it also satisfies all other notions of minimality. Since \(G_1\) and \(G_2\) are not Markov equivalent, \(U_{\mathcal{A}}(P)\neq \top\) for \(\mathcal{A}\) being any of the stronger notions of minimality in Section \ref{assume}.  \erk
\end{example}

%

%


 The following example shows that this is similarly the case for \(\mathbb{G}\) being maximal anterial graphs, letting property \(\mathcal{A}\) be any of the stronger notions of minimality in Section \ref{sec2}.

\begin{example}[\(U_{\mathcal{A}}(P)\neq \top\) for some \(P\), for maximal anterial graphs]
\label{stex}
    Consider the distribution \(P\) which induces 
    \begin{align*}
        J(P)=\ns{1\ci 2, \quad 1\ci 2\cd \ns{3,4}, \quad 3\ci 4, \quad 3\ci 4\cd \ns{1,2}},
    \end{align*} 
    and the non-Markov equivalent DAGs \(G_1\) and \(G_2\) given by 
    Figure \ref{fig:side_by_side_minipage}.
    \begin{figure}[htbp]
    \centering
    \begin{minipage}[b]{0.45\textwidth}
        \centering
        \includegraphics[scale=0.8]{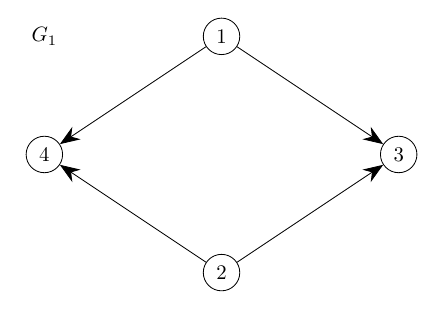}
    \end{minipage}
    \begin{minipage}[b]{0.45\textwidth}
        \centering
        \includegraphics[scale=0.8]{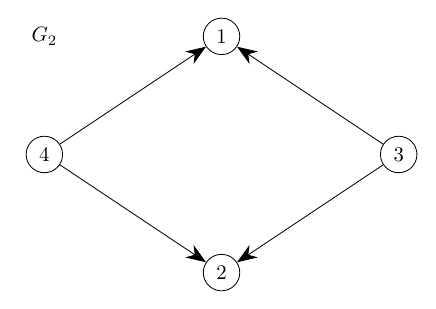}
    \end{minipage}
\caption{Non-Markov equivalant DAGs for Example \ref{stex}.}
\label{fig:side_by_side_minipage}
\end{figure}

There does not  exist a maximal
anterial graph \(G'\) such that \(J(G_1)\subset J(G')\subseteq J(P)\), thus \(P\) is P-minimal w.r.t. \(G_1\):    
Towards a contradiction, suppose there exists \(G'\) with  \(J(G_1)\subset J(G')\subseteq J(P)\).   Then \(J(G')\) will contain \(1\ci 2\) and \(3\ci 4\cd \ns{1,2}\) and either \(1\ci 2\cd \ns{3,4}\) or \(3\ci 4\).  
If \(3\ci 4\) is included, then ordered-upward and ordered-downward stability w.r.t.\  \emph{any} preorder  
\citep{orderfaith} must necessarily imply \(3\ci 4\cd 1\) or \(3\ci 4\cd 2\) which is not included in \(J(P)\), resulting in a contradiction; similarly if  \(1\ci 2\cd \ns{3,4}\) is included instead.  Thus, \(P\) is Pearl-minimal 
w.r.t.\  \(G_1\). The argument for Pearl-minimality of \(P\) w.r.t. \(G_2\) follows analogously.

It is clear that \(P\) is minimally Markovian and causally minimal to both \(G_1\) and \(G_2\). 

We see that both \(G_1\) and \(G_2\) are sparsest Markov graphs of \(P\), by showing there does not exist a maximal anterial graph \(G'\) such that \(|E(G')|< |E(G_1)|=|E(G_2)|=4\), and \(P\) is Markovian to \(G'\). If such a \(G'\) exists, then one of the edges in, without loss of generality, \(G_1\) will not be in \(G'\), which implies via the Markov property and maximality of \(G'\) a conditional independence of either \(1\ci 3 \cd C\) or 
\(1\ci 4 \cd C\) or \(2\ci 3 \cd C\) or \(2\ci 4 \cd C\) for some \(C\), a contradiction.

Thus \(U_{\mathcal{A}}(P)\neq \top\) for \(\mathcal{A}\) being any  other existing stronger notion of minimality in Section \ref{assume}, even in the class of maximal anterial graphs.
\erk
\end{example}

\begin{remark}
    In fact, in Example \ref{stex}, $P$ is a singleton-transitive compositional graphoid, an example of which is joint Gaussian distributions. 
    \erk
\end{remark}

Another way to strengthen Pearl-minimality \(\mathcal{M}\) is by including background knowledge \(\mathcal{E}\), resulting in the stronger property \(\mathcal{M}\wedge \mathcal{E}\).
Letting \(\mathcal{E}\) be the background knowledge such that no colliders are present in the graph, we can exclude the Pearl-minimal graph \(G_2\) in Example \ref{degenex}.

Given distributions from Examples \ref{degenex} and \ref{stex}, without background knowledge \(\mathcal{E}\), for property \(\mathcal{A}\) that are existing notions of minimality stronger than Pearl-minimality, the MEC of the output graphs for the corresponding algorithm for \(\mathcal{A}\wedge \mathcal{E}\) will not be consistent, the algorithm then has no reason to output one MEC over the other. Alternatively, the corresponding algorithm can simply output all the MECs for consideration.

\section{Summary and Discussion}
Consider any constraint-based causal learning algorithm that outputs a set of graphs \(\boldsymbol{G}(P)\) from an input distribution \(P\), we  represent the algorithm as a corresponding property \(\mathcal{A}\) as follows,
    \[\mathcal{A}(P,G)=\top \text{ for } G\in \mathbb{G} 
    \quad \iff \quad 
    G \in \boldsymbol{G}(P),\] 
Using this representation, this work contributes the following framework to study correctness conditions of any constraint-based causal learning algorithm.
 \[ \mathcal{A}(P,G_0)=\top \text{ and } U_{\mathcal{A}}(P)= \top\quad \Rightarrow \quad  \text{ the algorithm is correct}.\] 
 Using our framework, we recover some correctness conditions of causal learning in literature such as \citet{UhlSP} and \citet{teh}.  The framework implies that; given any causal learning algorithm, if we can identify the corresponding property \(\mathcal{A}\) that relates the set of output graphs \(\boldsymbol{G}(P)\) and input distribution \(P\), we can then immediately obtain correctness conditions for the algorithm, as \(\mathcal{A}\) and \(U_{\mathcal{A}}\). Under some cases, this correctness condition is exact. As an example, for any algorithm that uses an orientation rule to identify non-colliders and colliders in a skeleton to return DAGs;  by representing such orientation rules using local properties \(\mathfrak{n}\) and \(\mathfrak{c}\) on v-configurations,  we determine the corresponding property \(\mathcal{V}_{\mathfrak{n,c}}\). 
From this, based on given PC orientation rules and their corresponding property
 \(\mathcal{V}_{\mathfrak{n,c}}\), we use our framework to provide necessary and sufficient correctness conditions for the PC algorithm, depending on the version of orientation rules implemented in the actual computation, such as in \cite{python}.

Since the property \(\mathcal{A}\) describes both the output of the corresponding algorithm (using the term \(\mathcal{A}(P,\boldsymbol{G}(P))=\top\)) and the algorithm's correctness condition (using the terms \(\mathcal{A}(P,G_0)=\top\) and \(U_{\mathcal{A}}(P)=\top\)), this duality allows for any given property to be converted into the correctness condition of its corresponding algorithm by taking the conjunction with its uniqueness. Addressing the problem of correctness conditions of causal learning algorithms being too strong, this duality suggests the following paradigm to design causal learning algorithms. This paradigm allows for the correctness condition of an algorithm to be controlled for before designing the actual steps of the algorithm itself.
\begin{enumerate}
    \item Select property \(\mathcal{A}\).
    \item Study the correctness condition \(\mathcal{A}\) and \(U_{\mathcal{A}}\).
    \item Design the actual algorithm corresponding to \(\mathcal{A}\).
\end{enumerate}

Note that given any general property \(\mathcal{A}\), the framework does not contain information and steps on how to actually construct graphs \(G\) from a distribution \(P\) such \(\mathcal{A}(P,G)=\top\). This has to be designed separately.


Given two selected properties, in Propositions \ref{reverse} and \ref{preserve}, we then give some conditions on the properties for when we can study and compare the strength of correctness conditions of the corresponding causal learning algorithms. Focusing on algorithms that output DAGs and MAGs, these conditions are used to show that if one were to construct constraint-based causal learning algorithms that correspond to existing notions of minimality 
\citep{forster, lam, zbMATH05645279}, the
SMR condition, which is the exact correctness condition for the SP algorithm \citep{UhlSP}, is the weakest possible correctness condition among these algorithms. This shows that the SP algorithm is optimal in the sense that there is no other existing notion of minimality that results in a corresponding algorithm with a strictly weaker correctness condition.

 We reason that Pearl-minimality is necessary for properties for meaningful causal learning that avoids degenerate cases, such as those in Section \ref{pminimp}. However, from the duality of the framework, having just Pearl-minimality as the property would result in the corresponding algorithm having faithfulness as the exact correctness condition.
Thus strengthening Pearl-minimality is necessary for constraint-based causal learning algorithms that are correct beyond faithfulness. This can be done using stronger properties such as those in Table \ref{tab:example}, or by including background knowledge \(\mathcal{E}\).
Otherwise, motivated by Examples \ref{degenex} and \ref{stex}, our algorithm should just provide all the MECs as options for consideration. 









\section*{Appendix}
\bmsection*{Proofs}


\begin{proof}[Proof of Theorem \ref{mainth}]\hspace{2pt}
\begin{itemize}
    \item [\(\Rightarrow\)]Given  the distribution \(P\),  for the output set of the algorithm \(\boldsymbol{G}(P)\), we have  \(\mathcal{A}(P,\boldsymbol{G}(P))=\top\).   For the true casual graph \(G_0\), we have \(\mathcal{A}(P,G_0)=\top\). Since \(U_{\mathcal{A}}(P)=\top\), we have \(G_0\) Markov equivalent to all graphs in \(\boldsymbol{G}(P)\).
    \item [\(\Leftarrow\)]Assume that \(\mathcal{A}\) is a class property and corresponds to the algorithm.   
    Let \(\boldsymbol{G}(P)\) be the output set of the algorithm.  
    We have \(\mathcal{A}(P,\boldsymbol{G}(P))=\top\), and all graphs in \(\boldsymbol{G}(P)\)
    are
    Markov equivalent to the true causal graph \(G_0\). Thus by definition of class property, we have \(\mathcal{A}(P,G_0)=\top\).
    Assume \(U_{\mathcal{A}}(P)\neq \top\), then we have that  there exists  a graph in \(\boldsymbol{G}(P)\) that is not Markov equivalent to \(G_0\), giving a contradiction. \qedhere
\end{itemize}
\end{proof}

\begin{proof}[Proof of Proposition \ref{reverse}]
    Let properties \(\mathcal{A}\) and  \(\mathcal{B}\), of which \(\mathcal{A}\) is a class property, be such that \(\mathcal{A}(P,G)=\top\Rightarrow \mathcal{B}(P,G)=\top\) for all \(P\) and \(G\).
    Thus each MEC on which \(\mathcal{A}\) is \(\top\) must also be \(\top\) for \(\mathcal{B}\), giving  the implication
    \begin{equation}
    \label{before}
    U_\mathcal{B}(P)=\top \Rightarrow U_{\mathcal{A}}(P)=\top.
    \end{equation}
    
    Let \(\mathcal{B}(P,G)=\top\)  for some \(P\) and \(G\), and \(U_\mathcal{B}(P)=\top\).
    Since 
    \begin{equation*}
        \exists G_P\in \mathbb{G} \text{ such that } \mathcal{A}(P,G_P)=\top\quad \iff \quad \exists G'_P\in \mathbb{G} \text{ such that } \mathcal{B}(P,G'_P)=\top,
    \end{equation*}
    we have \(\mathcal{A}(P,G')=\top\) for some \(G'\), thus we obtain \(\mathcal{B}(P,G')=\top\). Since \(U_\mathcal{B}(P)=\top\), we have that \(G\) is Markov equivalent to \(G'\), and by the class property of \(\mathcal{A}\), we have \(\mathcal{A}(P,G)=\top\); \(U_{\mathcal{A}}(P)=\top\) follows from \eqref{before}, as desired.
\end{proof}

\begin{proof}[Proof of Proposition \ref{preserve}]
    Let properties \(\mathcal{A}\) and  \(\mathcal{B}\), of which \(\mathcal{A}\) is a class property, be such that \(\mathcal{A}(P,G)=\top\Rightarrow \mathcal{B}(P,G)=\top\) for all \(P\) and \(G\).
    
    Let \(\mathcal{A}(P,G)=\top\) for some \(P\) and \(G, \) and \(U_{\mathcal{A}}(P)=\top\); 
    thus we obtain
    \(\mathcal{B}(P,G)=\top\). Since for distributions \(P\) such that \(\mathcal{A}(P,G_P)=\top\) for some \(G_P\), we have
    \begin{equation*}
         \{G \in \mathbb{G}: \mathcal{A}(P,G)=\top\}=\{G \in \mathbb{G}:  \mathcal{B}(P,G)=\top\},
     \end{equation*}
     it follows that \(U_{\mathcal{A}}(P)=\top \iff U_\mathcal{B}(P)=\top\).
\end{proof}

\begin{proof}[Proof of Corollary \ref{weaktoGP}]
The condition
\begin{align*}
        \exists G_P\in \{G: \mathcal{A}(P,G)=\top\} \text{ such that } \mathcal{E}(P,G_P)=\top,
\end{align*}
implies that the following condition from Proposition \ref{reverse}
\begin{equation*}
        \exists G_P\in \mathbb{G} \text{ such that } \mathcal{A}(P,G_P)=\top\quad \iff \quad \exists G'_P\in \mathbb{G} \text{ such that } (\mathcal{A\wedge E})(P,G'_P)=\top
    \end{equation*}holds for \(\mathcal{A\wedge E}\) in place of \(\mathcal{B}\). Proposition \ref{reverse} can then be applied to obtain \textnormal{supp}\((\mathcal{A})\subseteq \textnormal{supp}(\mathcal{A}\wedge\mathcal{E})\).
\end{proof}

To prove  Proposition \ref{Auniq}, we will use the following well known result: 
\begin{proposition}[\cite{pearl}]
\label{Markovchar}
   DAGs 
   are Markov equivalent if and only if they have the same skeletons and colliders. 
%
 %
\end{proposition}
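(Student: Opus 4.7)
\textbf{Proof proposal for Proposition \ref{Markovchar}.} This is the classical characterization due to Verma and Pearl, so my plan is to recover their standard argument working directly from the definition of d-separation. The proof splits into the two implications.

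For the forward direction (Markov equivalence $\Rightarrow$ same skeleton and colliders), I would first show the skeletons agree: adjacent nodes cannot be d-separated by any conditioning set, so if an edge $i\sim j$ is present in $G_1$ but not in $G_2$, then in $G_2$ one may pick, e.g., the parents of $i$ in $G_2$ as a conditioning set to separate $i$ from $j$, yielding a triple in $J(G_2)\setminus J(G_1)$ and contradicting $J(G_1)=J(G_2)$. Granted the same skeleton, I would then show colliders coincide. If $i\fra k\fla j$ is a collider in $G_1$ but $i\sim k\sim j$ is a non-collider in $G_2$, then since $i,j$ are non-adjacent one can find a separator $C\subseteq V\setminus\{i,j\}$ in $G_2$ that necessarily contains $k$ (to block the length-two path through $k$), whereas in $G_1$ every separator for $i,j$ must omit $k$ and all its $G_1$-descendants (otherwise the collider at $k$ is opened). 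Choosing the appropriate witness on each side produces contradictory membership statements in $J(G_1)$ versus $J(G_2)$.

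For the reverse direction (same skeleton and colliders $\Rightarrow$ Markov equivalent), I would show that for each $i,j$ and $C$, every skeleton-path $\pi$ between $i$ and $j$ is d-connecting given $C$ in $G_1$ if and only if it is in $G_2$. Since the skeletons agree, the same sequences of nodes form paths in both graphs; since the collider/non-collider status of each v-configuration agrees, each interior node of $\pi$ has the same designation in $G_1$ and $G_2$. The one residual issue is that a collider $k$ on $\pi$ is opened by $C$ iff $k$ or some descendant lies in $C$, so one must verify that $\{k\}\cup\text{de}_{G_1}(k)$ and $\{k\}\cup\text{de}_{G_2}(k)$ intersect $C$ identically. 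I would establish this by noting that a directed path $k\fra\cdots\fra\ell$ in $G_1$ can be translated into one in $G_2$ edge-by-edge: each directed edge on this path either remains directed the same way in $G_2$ (forced by the shared colliders and skeleton at each end) or, if reversed, forms part of a larger covered edge reversal that still preserves reachability, so the descendant sets relevant to d-separation agree.

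The main obstacle I anticipate is precisely this last step—tracking descendants of colliders under a graph with the same skeleton and colliders but possibly different orientations on some non-collider edges. The cleanest way to finesse it is to invoke Chickering's theorem that any two Markov-equivalent DAGs are connected by a sequence of covered edge reversals, each of which preserves the skeleton, colliders, and the reachability needed for d-separation; but that invocation is essentially circular if one treats Markov equivalence as the statement being proved, so I would instead argue the descendant-set claim directly by induction on the length of the directed path witnessing descendance.
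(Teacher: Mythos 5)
First, note that the paper does not prove this proposition at all: it is imported as a known result of Verma and Pearl, stated with a citation and used as a lemma for Proposition \ref{Auniq}. So there is no in-paper argument to compare against, and your proposal must stand on its own as a proof of the classical theorem. Your forward direction is essentially sound: non-adjacent nodes in a DAG admit a d-separator (parents of whichever endpoint is not an ancestor of the other), adjacent nodes admit none, and for an unshielded collider $i\fra k\fla j$ in $G_1$ versus a non-collider in $G_2$ you correctly observe that any $G_1$-separator must omit $k$ while any $G_2$-separator must contain it.

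The reverse direction, however, contains a genuine gap. Your opening claim --- that every skeleton-path $\pi$ is d-connecting given $C$ in $G_1$ if and only if it is in $G_2$ --- is false. The hypothesis ``same colliders'' only constrains \emph{v-configurations}, i.e.\ unshielded triples in which the endpoints are non-adjacent; it says nothing about shielded triples. Take the triangle with $G_1$ given by $i\fra k\fla j$ together with $i\fra j$, and $G_2$ given by $i\fra k\fra j$ together with $i\fra j$. Both are complete DAGs with no v-configurations, hence trivially the same skeleton and colliders (and they are Markov equivalent, both inducing no separations), yet the path $\langle i,k,j\rangle$ is blocked by $C=\emptyset$ in $G_1$ and open in $G_2$. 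So the equivalence of $J(G_1)$ and $J(G_2)$ cannot be established path-by-path; it only holds after quantifying over all paths, which is exactly where the work lies. Your second concern --- that the sets $\{k\}\cup\mathrm{de}_{G_1}(k)$ and $\{k\}\cup\mathrm{de}_{G_2}(k)$ need not meet $C$ identically --- is also real, and the proposed fix (translating a directed path edge-by-edge, or an unspecified induction on its length) is not carried out and does not obviously close either issue, since individual edges genuinely can be reversed between the two graphs. A correct elementary argument for this direction typically abandons path-by-path bookkeeping and instead uses the ancestral/moral-graph characterisation of d-separation (Lauritzen's criterion), showing that the moralisation of the subgraph induced by $\an(\{i,j\}\cup C)$ is the same undirected graph for $G_1$ and $G_2$ whenever skeletons and unshielded colliders agree; alternatively one proves the transformational (covered-edge-reversal) characterisation independently, which, as you rightly note, you cannot simply cite here without circularity.
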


\begin{proof}[Proof of Proposition \ref{Auniq}]
\hspace{1cm}
  \begin{enumerate} 
    \item 
    Let \(G_1\) be Markov equivalent to \(G_2\) and \(\mathcal{V}_{\mathfrak{n,c}}(P,G_1)=\top\).   It suffices to verify that \(\mathcal{V}_{\mathfrak{n,c}}(P,G_2)=\top\).   The Markov equivalence gives $\sk(P) = \sk(G_1) = \sk(G_2)$, so that  Definition \ref{def-vnc}.\ref{item1-vnc} is satisfied.  Definition \ref{def-vnc}.\ref{item2-vnc} is also immediate from  Proposition \ref{Markovchar}.  
  %
%
%
\item
     Towards a contradiction, assume there exists non-Markov equivalent \(G_1\) and \(G_2\) such that \(\mathcal{V}_{\mathfrak{n,c}}(P,G_1)=\mathcal{V}_{\mathfrak{n,c}}(P,G_2)  = \top\). 
     By Proposition \ref{Markovchar}, there exist some v-configuration \(i\sim k\sim j\) in \(\sk(G_1)=\sk(G_2)\), which is a collider in one graph, but not the other---without loss of generality, assume that it is collider in \(G_1\) and a non-collider in \(G_2\).
%
    From Definition \ref{def-vnc}.\ref{item2-vnc},
    for \(G_1\), we have \(\frak{c}(P, i\sim k\sim j)=\top\) and for \(G_2\), we have \(\mathfrak{n}(P, i\sim k\sim j)=\top\), which contradicts \eqref{double-T}. \qedhere
\end{enumerate}
\end{proof}

\begin{proof}[Proof of Proposition \ref{Achar}]
\hspace{2pt}
\begin{enumerate}
\item
This is immediate from De Morgan's laws.
\item
\begin{enumerate}
    \item [] (b) \(\Rightarrow\) (a)
    Let \(\mathcal{V}_{\mathfrak{n,c}}(P,G)=\top\). 
\begin{enumerate}[(I)]
    \item If \(i\sim k\sim j\) is assigned to be a collider, then \(\mathfrak{n}(P,i\sim k\sim j)\neq\top\)---by Definition \ref{def-vnc}.\ref{item2-vnc}, it must be a collider in $G$.  
    \item 
    Similarly, if \(i\sim k\sim j\) is assigned to be a non-collider, then \(\mathfrak{c}(P,i\sim k\sim j)\neq\top\) and by Definition \ref{def-vnc}.\ref{item2-vnc} it must be a non-collider.
\end{enumerate}
        
\item[](a) $\Rightarrow$ (b) 
For v-configurations \(i\sim k\sim j\) in \(G\), we have the following breakdown of  cases to verify Definition \ref{def-vnc}.\ref{item2-vnc}.
    \begin{enumerate}[(I)]
        \item If \(i\sim k\sim j\) is a collider in \(G\), then: 
        \begin{enumerate}
            \item If \(i\sim k\sim j\) is assigned as a collider, then  \(\mathfrak{n}(P,i\sim k\sim j)\neq\top\), and by assumption \(\mathfrak{c}(P,i\sim k\sim j)=\top\).
            \item If \(i\sim k\sim j\) is unassigned, then \(\mathfrak{n}(P,i\sim k\sim j)=\top\) and \(\mathfrak{c}(P,i\sim k\sim j)=\top\).
        \end{enumerate}
        \item If \(i\sim k\sim j\) is a non-collider in \(G\), then: 
        \begin{enumerate}
            \item If \(i\sim k\sim j\) is assigned as a non-collider, then \(\mathfrak{c}(P,i\sim k\sim j)\neq\top\), and by assumption \(\mathfrak{n}(P,i\sim k\sim j)=\top\).
            \item If \(i\sim k\sim j\) is unassigned, then \(\mathfrak{n}(P,i\sim k\sim j)=\top\) and \(\mathfrak{c}(P,i\sim k\sim j)=\top\).\qedhere
        \end{enumerate}
    \end{enumerate}
\end{enumerate}

\end{enumerate}

\end{proof}

\begin{proof}[Proof of Proposition \ref{PCconds}]
  We tabulate the orientation rule \(I\) with the following local properties  \(\mathfrak{n}\) and \(\mathfrak{c}\) obtained from negation, used to define the corresponding \(V_I\).
    \begin{table}[h]
    \large
        \centering
        \begin{tabular*}{500pt}{@{\extracolsep\fill}lcc@{\extracolsep\fill}}%
\toprule
\textbf{Orientation Rule \(I\)} & \textbf{\(\mathfrak{n}(P,i\sim k\sim j)=\top\) if} & \textbf{\(\mathfrak{c}(P,i\sim k\sim j)=\top\) if} \\
\midrule
1 & \(\exists\) \(C \subseteq \vij\) s.t. \(i\ci j\cd C\), we have \(k\in C\) & \(\forall\) \(C\subseteq \vij\) s.t. \(i\ci j\cd C\), we have \(k\not \in C\) \\
2 & \(\forall\) \(C\subseteq \vij\) s.t. \(i\ci j\cd C\), we have \(k \in C\) & \(\exists\) \(C\subseteq \vij\) s.t. \(i\ci j\cd C\), we have \(k\not\in C\) \\
3 & \(\exists\) \(C\subseteq \vij\) s.t. \(i\ci j\cd C\), we have \(k \in C\) &  \(\exists\) \(C\subseteq \vij\) s.t. \(i\ci j\cd C\), we have \(k \not \in C\) \\
\bottomrule
\end{tabular*}
        \label{tab:my_label}
    \end{table}

Let \(I \in \ns{1,2}\).
Since any v-configuration assigned to be collider/non-collider via the orientation rule \(I\) will also be a collider/non-collider in all graphs in the output set \(\boldsymbol{G}(P)\) of the corresponding PC algorithm, 
by Proposition \ref{Achar}, all graphs \(G\) that satisfy \(\mathcal{V}_{I}(P,G)=\top \) is in the output set \(\boldsymbol{G}(P)\);  
thus  \(\mathcal{V}_I\) corresponds to the algorithm and output set \(\boldsymbol{G}(P)\) satisfy \(\mathcal{V}_{I}(P,\boldsymbol{G}(P))=\top \). Applying Theorem \ref{mainth} gives us that \(\mathcal{V}_{I}(P,G_0)=\top\) 
is a sufficient condition for the correctness 
of the orientation rule \(I\), since for distributions \(P\) such that \( \mathcal{V}_{I}(P,G_P)=\top\) for some \(G_P\), we have \(U_{\mathcal{V}_{I}}(P)=\top\) by Proposition \ref{Auniq}; furthermore,  since \(\mathcal{V}_{I}\) is a class property and corresponds to the algorithm, we have that \(\mathcal{V}_{I}(P,G_0)=\top \)
is also a necessary condition. 

The case for \(I=3\) also follows similarly except \(U_{\mathcal{V}_I}(P)=\top\) may not necessarily hold and needs to be assumed to apply   Theorem \ref{mainth}. 
%
\end{proof}

\begin{proof}[Proof of Proposition \ref{a3smr}]
    Via the contrapositive, if \(P\) does not satisfy the SMR condition, and if \(\sk(P)\neq \sk(G)\), then we are done. Suppose 
 \(\sk(P)=\sk(G)\). Then there exists \(G_1\) not Markov equivalent to \(G_2\) such that \(\sk(G_1)=\sk(G_2)\) and \(P\) is Markovian w.r.t. both \(G_1\) and \(G_2\). 
    By Proposition \ref{Markovchar},  we have for some v-configuration \(i\sim k\sim j\) in \(\sk(G_1)=\sk(G_2)\),  without loss of generality, \(i\sim k\sim j\) is a collider in \(G_1\) and a non-collider in \(G_2\). 
    
    From \(P\) being Markovian to \(G_1\), \(i\ci j\cd C\), for some \(C, k\not \in C\), and from \(P\) being Markovian to \(G_2\), \(i\ci j\cd C\), for some \(C, k \in C\).    Since \(V_3(P,G_1)=V_3(P,G_2)=\top\) for non-Markov equivalent \(G_1\) and \(G_2\), \(U_{\mathcal{V}_3}\) does not hold.
\end{proof}


In our proof of Proposition \ref{propforst}, we will use a version of Proposition \ref{Markovchar} for MAGs due to \cite{mecag}.

\begin{definition}[Minimal collider path] A collider node on path \(\pi= \langle i_0,\ldots, i_n\rangle\) is a node \(i_m\) such that \(i_{m-1}\circ\fra  i_{m} \fla \circ i_{m+1}\), where 
(\(\circ\fra \)) indicates that the edge is either 
directed \((\fra \)) 
or
bidirected
(\(\arc\)). A subpath \(\pi_{\textnormal{sub}}\) between \(i_k\) and \(i_{\ell}\) of path \(\pi\) is a subsequence of \(\pi\), \(\langle i_k,\ldots, i_{\ell}\rangle\) such that \(\pi_{\textnormal{sub}}\) is a path.
    
    A \emph{collider path} is a path \(\langle i_0,\ldots, i_n\rangle\) such that\begin{enumerate}
        \item  \(i_0\) is not adjacent to \(i_n\),  
        \item \(i_m\) is a collider node on the path, for all 
        \(m \in \{1,\ldots,n-1\}\).
    \end{enumerate}
    A \emph{minimal collider path} is a collider path \(\langle i_0,\ldots, i_n\rangle \) such that any subpath between nodes \(i_0\) and \(i_n\) is not a collider path.
\end{definition}
\begin{proposition}[\citet{mecag}]\label{mecag}
    MAGs \(G_1\) and \(G_2\) are Markov equivalent if and only:
    \begin{enumerate}
        \item 
        $\sk(G_1) = \sk(G_2)$.
        \item \(G_1\) and \(G_2\) have the same minimal collider paths.
    \end{enumerate}
\end{proposition}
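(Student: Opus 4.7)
The plan is to prove the two implications separately. The forward direction (Markov equivalence implies shared skeleton and shared minimal collider paths) follows from fairly standard m-separation arguments, while the reverse direction (shared skeleton and shared minimal collider paths imply Markov equivalence) requires a careful reconstruction of m-connecting paths from one graph in the other.

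For the forward direction, suppose $G_1$ and $G_2$ are Markov equivalent MAGs. To see $\sk(G_1) = \sk(G_2)$, note that if $i,j$ were adjacent in $G_1$ but not in $G_2$, maximality of $G_2$ would produce some $C \subseteq V \setminus \ns{i,j}$ with $i \perp_{G_2} j \cd C$, but adjacency in $G_1$ forbids any such m-separation, contradicting Markov equivalence. For shared minimal collider paths, given a minimal collider path $\pi = \langle i_0, \ldots, i_n\rangle$ in $G_1$, I would construct a conditioning set $C$ consisting of the internal nodes $i_1,\ldots,i_{n-1}$ together with any additional ancestors of these colliders required to activate them, chosen so that $\pi$ m-connects $i_0$ and $i_n$ in $G_1$ given $C$. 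Markov equivalence forces the same m-connection in $G_2$; minimality of $\pi$ together with $\sk(G_1)=\sk(G_2)$ then rules out alternative shorter witnessing paths in $G_2$ and forces each $i_m$ ($1\le m\le n-1$) to be a collider in $G_2$, so $\pi$ is also a minimal collider path there.

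For the reverse direction, assume conditions (1) and (2) and suppose, toward a contradiction, that without loss of generality some triple $(A,B,C)$ satisfies $A \not\perp_{G_1} B \cd C$ but $A \perp_{G_2} B \cd C$. Pick an m-connecting path $\pi$ in $G_1$ between some $a\in A, b\in B$ given $C$. The strategy is to decompose $\pi$ into maximal consecutive runs of colliders (each a collider subpath whose endpoints are non-colliders of $\pi$ or endpoints of $\pi$), shorten each such run to a minimal collider path by iteratively skipping any internal node whose removal still leaves a collider path (this is possible because adjacency of a skipped pair is controlled by $\sk$, shared between the graphs), invoke condition (2) to transfer each minimal collider path into $G_2$ with the same collider structure, and then reassemble the pieces into an m-connecting path in $G_2$ between $a$ and $b$ given $C$, yielding the required contradiction.

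The main obstacle will be the bookkeeping of ancestor relationships relative to $C$ during the reassembly, since m-connection requires every collider on the path to lie in $\an(C)$ and every non-collider to lie outside $C$. To handle this, I would establish as a preparatory lemma that $\an_{G_1}(v) \cap C = \an_{G_2}(v) \cap C$ (at least for the $v$ encountered on the path), by arguing that any directed edge $u\fra v$ in $G_1$ whose orientation differed in $G_2$ (necessarily a bidirected edge there, since the skeletons agree) would, by a short case analysis using the non-adjacency of appropriate neighbours, yield a minimal collider path present in one graph but not the other, contradicting (2). The second delicate point is verifying that the ``shortening'' of a collider run actually terminates in a \emph{minimal} collider path whose endpoints match the endpoints of the original run, which follows by induction on run length since skipping an internal collider preserves collider-hood of the remaining internal nodes in a MAG; this is where maximality of the ancestral graph is used.
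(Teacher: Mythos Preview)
The paper does not prove this proposition at all: it is quoted as a known result from \citet{mecag} and is used as a tool in the proof of Proposition~\ref{propforst}. So there is no ``paper's own proof'' to compare against; any assessment has to be on the merits of your sketch alone.

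Your forward direction is essentially fine. The skeleton argument via maximality is standard, and the minimal-collider-path transfer via a carefully chosen conditioning set is the right idea, even if the details of why no alternative witnessing path in $G_2$ can exist are left vague.

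The reverse direction has a genuine gap. Your ``preparatory lemma'' asserts (in effect) that if a directed edge $u\fra v$ in $G_1$ is oriented differently in $G_2$, then some minimal collider path must differ between the two graphs. This is false: take $G_1: 1\fra 2\fra 3$ and $G_2: 1\fla 2\fla 3$. These are MAGs with the same skeleton and the same (empty) set of minimal collider paths, yet the edge $1\text{--}2$ has opposite orientation, and the ancestor sets differ ($1\in\an_{G_1}(3)$ but $1\notin\an_{G_2}(3)$). So you cannot conclude from conditions (1) and (2) alone that the relevant collider nodes on your $G_1$-connecting path remain ancestors of $C$ in $G_2$, and your reassembly step does not go through as written. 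The published proofs of this characterisation (e.g.\ via discriminating paths or the triplex formulation) handle this issue by a more delicate inductive argument that does not attempt to match ancestor relations edge-by-edge; you would need a comparable device here.
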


To prove Proposition \ref{propforst}, we will use discriminating paths, an analogue of colliders in MAGs.
\begin{definition}[Discriminating paths \citep{dispath}]
    A  path  \(\langle i_0, \ldots, i_{n-1},i_n\rangle\) is \emph{discriminating} if:
    \begin{enumerate}
        \item  the subpath \(\langle i_0,\ldots, i_{n-1}\rangle \) is a collider path, and  
        \item  \(i_m\fra  i_{n}\), for all  
        \(m \in\{1,\ldots,n-2\}\).
    \end{enumerate}
\end{definition}

\begin{proof}[Proof of Proposition \ref{propforst}]\hfill
    \begin{itemize}
        \item[] (\(1\Rightarrow 2\)) Suppose that  \(P\) is minimally Markovian to a
        maximal graph \(G\).  
        Let $P$ be Markovian to some MAG $G'$.  If $i$ is not adjacent to $j$ in $G'$, then by maximality, there is a separation in $G'$ 
        and 
        from the Markov property, there is a separation in $P$;  since $P$ is adjacency faithful to $G$, the node $i$ is also not adjacent to $j$ in $G$.   Thus we have 
        \begin{equation}
\label{subgraph}
i \text{ adjacent to } j \text{ in } \sk(G) \quad \Rightarrow \quad i \text{ adjacent to } j \text{ in }\sk(G')   \text{ for all MAGs } G' \text{ such that } P \text{ is Markovian to }G'.
        \end{equation}

        %
It follows from \eqref{subgraph} that if $P$ is Markovian to a MAG $G'$, then $\sk(G)$ is a subgraph of $\sk(G')$ so that  \(|E(G)|\leq |E(G')|\), as desired.
        \item[](\(2 \Rightarrow 3\)) We prove the contrapositive. 
  Let $G'$ be a MAG such that
        \begin{align} \label{p-minc}
            J(G)\subset J(G') \subseteq J(P). 
        \end{align}
%
%
Towards a contradiction, suppose that \(|E(G')|\geq|E(G)|\).   We consider two cases.
\begin{itemize}
            \item[]  If \(|E(G')|> |E(G)|\), then we have 
            some \(i\) adjacent to \(j\) in \(G'\), but not in \(G\). By maximality, we have the separation \(i\perp_G j\cd C\) for some \(C\), but  \(i\not \perp_{G'} j\cd C\), contradicting \eqref{p-minc}.

            \item[]  If \(|E(G')|= |E(G)|\), then 
            $\sk(G) = \sk(G')$, 
            and we argue for an
             analogous contradiction, as  follows.  
            
            There must exist a minimal collider path in one graph, but not the other, otherwise \(J(G)=J(G')\) by Proposition \ref{mecag}, contradicting \eqref{p-minc}. 
            Let
            \(\pi\)  be the shortest such minimal collider path in one graph, but not the other; furthermore, we assume that $\pi$ contains only one  non-collider node in the graph where \(\pi\) is not a collider path, since  if 
            \(\pi\) contains more than one such node, we can take a shorter subpath \(\pi_{\text{sub}}\) with different endpoints on \(\pi\), and minimality would be preserved, otherwise \(\pi\) is not minimal.  Without loss of generality, let \(\pi\) be in \(G'\) but not in \(G\).
            
            Thus following similar reasoning as in the proof of Proposition 24 in 
        \citet{Sad}, it suffices to consider \(\pi\) to be a discriminating path of the form \(\langle i,i_1,\ldots,i_n,j\rangle \), with \(i_n\) being a collider node in \(G'\) 
        with \(n\geq 1\), while \(\langle i,i_1,\ldots, i_n\rangle \) is a collider path and \(i_n\) is a non-collider node in \(G\). %
            

            For \(m\in \ns{1,\ldots, n-1}\), we have \(i_{m}\fra j\), otherwise a shorter minimal collider path than \(\pi\) would be created.   Thus 
            \(\langle i,i_1,...,i_n,j \rangle\) forms a discriminating path in \(G\) with \(i_n\) being a non-collider node. 
            Since \(i\) is not adjacent to \(j\) and via maximality of \(G\) and \(G'\), by \citet[Lemma 3.9]{dispath}, there exists \(C\) such that \(i\perp_G j\cd C\) and \(i_n \not \in C\), and there exists \(C'\) such that \(i\perp_{G'} j\cd C'\) and \(i_n \in C'\), contradicting \eqref{p-minc}.
            %
            
        \end{itemize}

        \item[] \((3\Rightarrow 4)\) 
        We prove the contrapositive.  
        Let \(G'\) be a maximal subgraph \(G\), such that \(P\) is Markovian to \(G'\).  Then there are   nodes \(i\) adjacent to \(j\) in \(G\), but not in \(G'\).  Via maximality of \(G'\), we have  a separation in $G'$, but not in $G$. Since graphical separations are monotonic with respect to edge deletion, we have 
        \(J(G)\subset J(G') \subseteq J(P)\), as desired. \qedhere
    \end{itemize}
\end{proof}
\begin{proof}[Proof of Proposition \ref{propforstlit} (\(1\Rightarrow 2\))]
    Since DAGs are maximal, the proof follows similarly as the Proof of Proposition \ref{propforst} (\(1\Rightarrow 2\)), by replacing MAGs with DAGs.
    
The remaining implications are provided by \citet{forster}.
\end{proof}
%
%
%

\begin{proof}[Proof of Proposition \ref{SMR good dag}] 

     Let
       \(P\) be a distribution such that \(\mathcal{M}_1(P,G_P)=\top\) for some graph \(G_P\). 
     From \eqref{subgraph} it can be seen that all such \(G_P\) are subgraphs of sparsest Markov graphs and are thus sparsest Markov graphs. Hence the set of graphs \(G\) such that \(\mathcal{M}_1(P,G)=\top\) are exactly the set of sparsest Markov graphs, i.e. the set of graphs \(G\) such that \(\mathcal{M}_2(P,G)=\top\), satisfying the conditions in Proposition \ref{preserve}.

     Since \(\mathcal{M}_1(P,G)=\top\Rightarrow \mathcal{M}_2(P,G)=\top\) for all \(P\)  and \(G\) from Proposition \ref{propforstlit}, we can apply Proposition \ref{preserve} to obtain \(\supp(\mathcal{M}_1)\subseteq \supp(\mathcal{M}_2)\). 

Let $P$ be a distribution. Note that  $P$ is  Markovian to a DAG $G_P$ with  a minimal  number of edges.  By Proposition \ref{propforstlit}, we have
\begin{equation}
    \label{nine-for-one}
\mathcal{M}_2(P,G^2_P)= \mathcal{M}_3(P,G^2_P)= \mathcal{M}_4(P,G^2_P)=\top.
 \end{equation}
 Since \(\mathcal{M}_2\) and \(\mathcal{M}_3\) are class properties, 
 Proposition \ref{reverse} with \eqref{nine-for-one} gives that  \(\supp(\mathcal{M}_4)\subseteq \supp(\mathcal{M}_3)\subseteq \supp(\mathcal{M}_2)\).
\end{proof}

\begin{proof}[Proof of Proposition \ref{SMR good}]
The proof is routine variation of the proof of Proposition \ref{SMR good dag}, where we apply  Proposition \ref{propforst} in lieu of Proposition \ref{propforstlit} to obtain \eqref{nine-for-one}.
%
%
%
\end{proof}

\begin{proof}[Proof of Proposition \ref{uniqpfaith}]
 \hspace{2pt}
\begin{itemize}
    \item [\(\Leftarrow\)] 
    
    This direction is trivial.
    \item[\(\Rightarrow\)]
     Let \(P\) be such that \(U_{\mathcal{M}}(P)=\top\). 
     Let  \(G\in \mathbb{G}\) satisfy the Pearl-minimality constraint so that there is \emph{no} \(H\in \mathbb{G}\) such that
    \begin{align}\label{P-min cons}
        J(G)\subset J(H) \subseteq J(P).
    \end{align}
Towards a contradiction, if $G$ is not graphical, then      there exist some \(A\ci B \cd C \in J(P)\backslash J(G)\); via decomposition there exists some \(i\ci j\cd C \in J(P)\backslash J(G)\).  Via  \citet[Lemma B.14]{lam}, there exists a DAG \(G'\) such that \(J(G')=\{i\ci j\cd C\}\in J(P)\), from which we obtain a  \(H\) such that \(J(G')\subset J(H)\), so that
\eqref{P-min cons} holds. 
 \qedhere
 %
    \end{itemize}
\end{proof}

\bibliography{wileyNJD-AMA}
\end{document}